\documentclass{article}

\usepackage{microtype}
\usepackage{graphicx}
\usepackage{subcaption}
\usepackage{booktabs}
\usepackage{dblfloatfix}
\usepackage{afterpage}

\usepackage{hyperref}

\usepackage[accepted]{icml2026}

\usepackage{amsmath}
\usepackage{amssymb}
\usepackage{mathtools}
\usepackage{amsthm}

\usepackage[capitalize,noabbrev]{cleveref}

\theoremstyle{plain}
\newtheorem{theorem}{Theorem}[section]

\newtheorem{lemma}[theorem]{Lemma}

\theoremstyle{definition}
\newtheorem{definition}[theorem]{Definition}

\theoremstyle{remark}

\icmltitlerunning{The Data Manifold under the Microscope}

\begin{document}

\twocolumn[
  \icmltitle{The Data Manifold under the Microscope}

  \icmlsetsymbol{equal}{*}

  \begin{icmlauthorlist}
    \icmlauthor{Marios Koulakis }{xxx}
    \icmlauthor{Constantin Seibold }{yyy}
  \end{icmlauthorlist}

  \icmlaffiliation{xxx}{Independent Researcher, Germany}
  \icmlaffiliation{yyy}{Diagnostic and Interventional Radiology, University Clinic Heidelberg, Heidelberg, Germany}

  \icmlcorrespondingauthor{Marios Koulakis}{mariosco100@gmail.com}

  \icmlkeywords{Machine Learning, ICML}

  \vskip 0.3in
]

\printAffiliationsAndNotice{}

\begin{abstract}
A significant gap exists between theory and practice in deep learning. Generalization and approximation error bounds are often derived for simplified models or are too loose to be informative. Many rely on the manifold hypothesis and on geometric regularity such as intrinsic dimension, curvature, and reach. Progress requires insight into data-manifold geometry and suitable benchmarks, yet existing options are polarized: analytic manifolds with known geometry but limited applicability, or real-world datasets where geometry is only coarsely estimable. We introduce a benchmarking framework for studying data geometry. We repurpose and extend dSprites and COIL-20 with additional transformation dimensions and dense, axis-aligned sampling, and pair them with finite-difference estimators that recover curvature, reach, and volume at near-ground-truth accuracy in a regime where general-purpose estimators are unreliable or difficult to deploy. The framework is intended as a controlled testbed, useful as a calibration environment for geometric estimators and a sandbox for probing theoretical assumptions. To illustrate its use, we present two application studies, namely assessing the scaling behavior of the bounds of Genovese et al. and Fefferman et al., and tracking the layer-wise geometry of a $\beta$-VAE, highlighting the behavior of current bounds and the value of controlled benchmarks for guiding and validating future theory.

A reference implementation is available at \href{https://github.com/koulakis/manifold-microscope}{github.com/koulakis/manifold-microscope}.
\end{abstract}

\section{Introduction}

Deep learning has become the dominant paradigm for a broad range of tasks, and in recent years generative models in particular, such as variational autoencoders (VAEs) \citep{kingma2013auto}, diffusion models \citep{ho2020denoising}, and modern masked or autoencoding architectures \citep{he2022masked}, have seen striking empirical success. A common way to interpret this success is through the manifold hypothesis \citep{cayton2005algorithms}: high-dimensional data often concentrate near low-dimensional manifolds, and learning amounts to finding useful parameterizations of them. Many modern models can thus be viewed as procedures for fitting or approximating data manifolds, whether explicitly, as in latent-variable models \citep{arvanitidis2017latent}, or implicitly, as in denoising and score-based flows \citep{horvat2021denoising}.

While appealing, it remains unclear how networks fit manifolds in practice. Classical results such as \cite{genovese2012minimax} and \cite{fefferman2018fitting} on minimax rates for manifold estimation, \cite{aamari2019nonasymptotic} on the tightness of those bounds under varying manifold smoothness, and more recent work on approximation of Sobolev classes on manifolds \citep{tan2024blessing} highlight the role of geometric quantities like curvature, reach, or sampling density in learning. Yet in realistic data these quantities are rarely observable: the data-generating process is unknown, the intrinsic dimension is uncertain \citep{campadelli2015intrinsic}, and sampling can be irregular \citep{shroff2011manifold, sedghi2020multi}. As a result, theoretical guarantees often rely on constants that cannot be directly checked or measured.

This creates two complementary needs. On the theoretical side, sharper and more data-adaptive bounds are desirable, or at least a clearer picture of when existing bounds are informative. On the empirical side, there is a lack of benchmarks that balance realism with geometric control: synthetic manifolds are too simple, while real-world datasets obscure the geometry entirely.

To narrow this gap, we introduce a framework that constructs low-dimensional image families densely sampled along controlled transformation axes (e.g., rotation, translation, scale) and provides efficient finite-difference estimators of geometric measures such as curvature, reach, and volume. Combined with an experimental pipeline for manifold fitting, it enables systematic tests of how theory and practice align. While we illustrate it here using generative models and manifold-fitting bounds, the framework is general and also supports settings such as discriminative learning or benchmarking geometric-measure estimators. Our contributions are the following:
\begin{itemize}
\item A reproducible framework of adapted low-dimensional datasets with dense, axis-aligned sampling, plus an experimental pipeline for probing modern generative and representation models.
\item A suite of efficient finite-difference estimators for pointwise and global geometric quantities such as curvature, reach, or volume of manifolds.
\item Two illustrative case studies that show possible uses of the framework, empirically probing how the scaling of existing manifold-fitting bounds matches observed errors, and tracking how a $\beta$-VAE reshapes manifold geometry across layers.
\end{itemize}

\section{Related Work}

\noindent\textbf{Manifold Analysis:}
Following the manifold hypothesis \citep{cayton2005algorithms}, many works analyze the geometry of data and learned representations. Early results on inferring topological structure from samples include guarantees for homology recovery \citep{niyogi2008finding}. Studies on VAEs and $\beta$-VAEs \citep{kingma2019introduction,higgins2017beta} show that learned latent spaces often exhibit limited curvature \citep{arvanitidis2017latent,shao2018riemannian}. Geometric invariants such as curvature, tangent spaces, and reach have been used to study robustness and disentanglement \citep{aamari2019estimating,berenfeld2022estimating,birdal2021intrinsic}, though outcomes depend strongly on dataset, architecture, and estimator \citep{brahma2015deep,kaufman2023data}. Recent representation-learning approaches explicitly impose manifold structure via learned charts \citep{schonsheck2019chart}. Connections to expressive power have also been explored through manifold topology \citep{yao2024theoretical}.

\noindent\textbf{Manifold Fitting Bounds:}
\cite{genovese2012minimax} established the minimax rate $O((1/n)^{2/(2+d)})$, later shown optimal by \cite{kim2015tight}, though the corresponding estimator is computationally infeasible. \cite{aamari2019nonasymptotic} provides nonasymptotic rates for manifold estimation that depend on manifold smoothness, alongside rates for tangent space and curvature estimation. \cite{yao2025manifold} handled unbounded noise via projection, yielding $O(\sigma^2\log(1/\sigma))$ Hausdorff error for sample size $O((1/\sigma)^{d + 3})$. Neural estimators \citep{yao2023manifold,yao2024manifold} offer scalable alternatives at the cost of weaker guarantees. Earlier complexity results for testing the manifold hypothesis \citep{narayanan2010sample} relate geometry, dimension, and sample efficiency more explicitly.

A complementary line of work uses reach as a structural primitive. \cite{fefferman2016testing} introduced a geometric approach based on preserving reach, with noisy-data extensions reducing complexity from double- to single-exponential \citep{fefferman2018fitting,fefferman2020reconstruction}. However, these guarantees depend on unknown geometric parameters such as reach and volume. For example, the constants in \cite{yao2025manifold} scale as $\tau^{-2}$, illustrating a recurring issue: theoretical bounds often require prior knowledge of quantities that can only be estimated from the manifold itself.

\noindent\textbf{Geometric Property Estimation:}
Recent work develops estimators and sample complexity bounds for reach and related quantities \citep{aamari2019estimating,berenfeld2022estimating,aamari2023optimal}, though empirical validation remains challenging due to the lack of ground truth geometric data. Similar progress exists for curvature estimation: scalar curvature estimators with nonasymptotic guarantees \citep{aamari2019nonasymptotic,gawlik2025finite}, methods for the second fundamental form and Ricci curvature \citep{acosta2023quantifying,samal2018comparative}, and tangent space estimators with provable accuracy \citep{cheng2016tangent,cazals2005estimating}. These results extend the classical geometric recovery literature \citep{niyogi2008finding}.

Overall, theory provides strong asymptotic guarantees, but empirical understanding is hindered by the absence of datasets with exact geometric ground truth. Our framework addresses this gap by combining dense, axis-aligned sampling with finite-difference estimators, providing a controlled calibration testbed in which geometric estimators can be validated and theoretical predictions can be probed at low intrinsic dimension.

\setlength{\abovecaptionskip}{0pt}
\setlength{\belowcaptionskip}{0pt}

\begin{figure*}
    \centering
    \includegraphics[width=\linewidth, trim=0cm 0cm 0cm 0.55cm, clip]{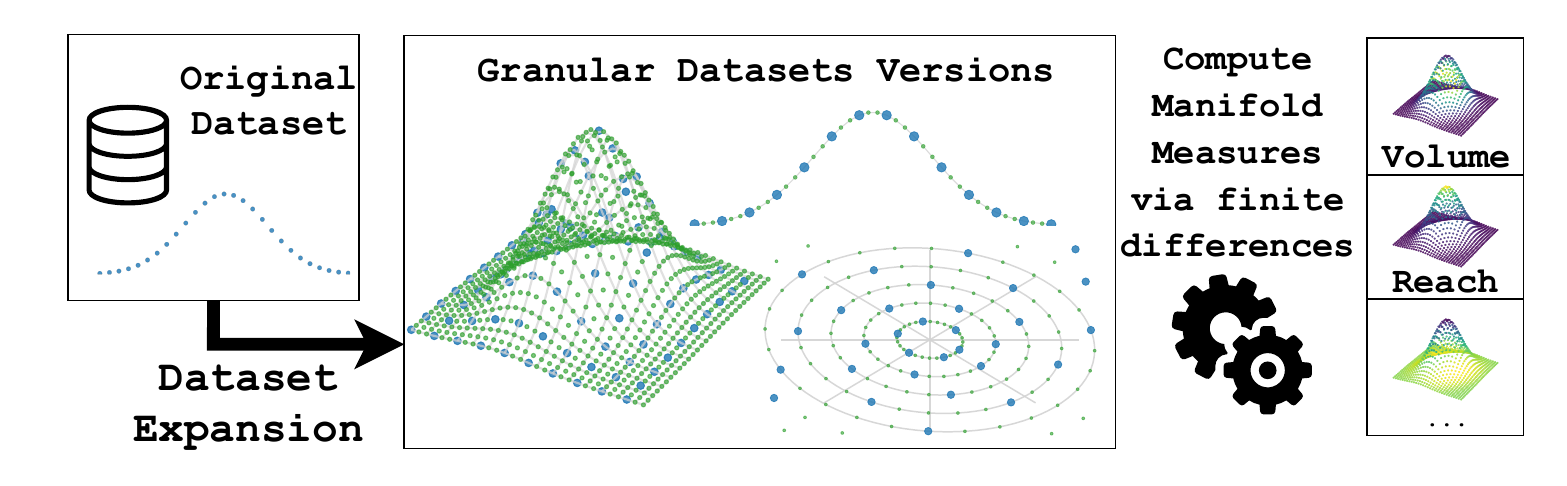}
\caption{
From a low-dimensional seed we produce dense, regular grids via analytic parametrizations or systematic transforms; central finite differences recover geometric estimators such as reach and curvature enabling validation of manifold-fitting bounds.}
    \label{fig:workflow}
\end{figure*}

\section{Background and Definitions}

\noindent\textbf{Datasets as Manifolds:}
The manifold hypothesis suggests that high-dimensional datasets often concentrate near low-dimensional manifolds. We focus on datasets where the number of intrinsic factors of variation $d$ is small and fixed, and where these factors are explicitly known. Each dataset is modeled as a union of connected smooth $d$-dimensional manifolds embedded in $\mathbb{R}^D$, possibly with boundary. Typically, each of the $k$ semantic classes in the dataset corresponds to a separate connected component of this union.

For this work, we restrict attention to simple topologies in which each manifold factors into cyclic and non-cyclic directions. Concretely, every manifold is assumed to be homeomorphic to $[0,1]^r \times (S^1)^s, r+s = d$, so that some coordinates vary over a compact interval while others wrap around a circle. This setting captures many common synthetic datasets. A canonical example is \emph{dSprites}, a collection of $64 \times 64$ grayscale images of objects (square, ellipse, heart) undergoing controlled transformations such as scaling, rotation, and translation. The manifold of a class in this dataset is homeomorphic to $[0,1]^3 \times S^1$ and is embedded into $\mathbb{R}^{4096}$. Note that manifolds of any topology can also be handled by covering them with charts and applying the framework chart-wise.

To obtain discrete datasets from this continuous geometric picture, we impose a grid structure. Let $n_l$ be the number of sampled values along the $l$-th dimension, so that the total number of grid points is $n = \prod_{l \leq d} n_l$. We define
\begin{equation}
G = \Big\{(\tfrac{j_1}{n_1 - 1}, \ldots, \tfrac{j_r}{n_r - 1}, 2\pi\tfrac{j_{r+1}}{n_{r+1}}, \ldots, 2\pi\tfrac{j_d}{n_d})\Big|
j_\ell \in [0, n_\ell) \Big\}
\end{equation}
where the first $r$ coordinates and the last $s$ coordinates sample $[0,1]$ and $S^1$ respectively on constant intervals. For each class $i \leq k$, we define a mapping $u_i : G \to M_i$ that associates each grid point with a dataset element obtained by applying the corresponding transformations. The image $u_i[G]$ provides a discrete parametrization of the manifold $M_i$, and by cutting along cyclic directions one obtains discrete patches of $M_i$. The complete discretized dataset is then
\begin{equation}
X_G = \bigcup_{i \leq k} u_i[G].
\end{equation}

\noindent\textbf{Geometric Measures:}
We focus on three geometric quantities describing the local and global structure of a manifold: \emph{volume}, \emph{scalar curvature}, and \emph{reach}. For completeness, we recall the definitions of the Riemannian metric and the objects needed to introduce these quantities.

\begin{definition}[Riemannian metric]
A Riemannian metric $g$ on a $d$-dimensional differentiable manifold $M$ assigns to each point $p\in M$ an inner product $g_p$ on the tangent space $T_pM$. If $M$ is embedded in $\mathbb{R}^D$, the ambient Euclidean metric induces a Riemannian metric by restriction. In local coordinates $u: U\to M$, the metric matrix is
\begin{equation}
g_{ij}(x) = \langle \partial_i u(x), \partial_j u(x)\rangle_{\mathbb{R}^D}.
\end{equation}
\end{definition}

\begin{definition}[Volume element and volume]
On a Riemannian manifold $(M,g)$ with local coordinates $(u^1,\ldots,u^d)$, the natural volume element is
\begin{equation}
d\operatorname{V} = \sqrt{\det(g)} \, du^1\wedge\cdots\wedge du^d.
\end{equation}
The volume of a region $R\subset M$ is then
\begin{equation}
\operatorname{Vol}(R) = \int_{u^{-1}(R)} \sqrt{\det(g)} \, du^1\cdots du^d.
\end{equation}
\end{definition}

Throughout, we use Einstein summation notation: free upper and lower indices in the same term are implicitly summed, e.g.\ $g^{ij}Ric_{ij} = \sum_{i,j} g^{ij}Ric_{ij}$.

\begin{definition}[Notions of curvature]
The curvature of $(M,g)$ is encoded in the Riemann tensor
\begin{equation}
R^i_{\ jkl} = \partial_k \Gamma^i_{jl} - \partial_l \Gamma^i_{jk} + \Gamma^i_{kr}\Gamma^r_{jl} - \Gamma^i_{lr}\Gamma^r_{jk},
\end{equation}
where the Christoffel symbols are
\begin{equation}
\Gamma^i_{jk} = \tfrac{1}{2} g^{ir}(\partial_j g_{rk} + \partial_k g_{rj} - \partial_r g_{jk}).
\end{equation}
Contracting yields the Ricci tensor $Ric_{ij} = R^r_{\ irj}$ and the scalar curvature $R = g^{ij} Ric_{ij}$.
\end{definition}

For more details on the above definitions one can refer to a classical differential geometry reference, for example \cite{do1992riemannian}.

\begin{definition}[Reach \cite{federer1959curvature}]
For a closed set $A\subset \mathbb{R}^D$, the \emph{medial axis} is the set of points with more than one nearest neighbor in $A$. The \emph{reach} of $A$ is the distance to its medial axis:
\begin{equation}
\tau_A = \inf_{p\in A} d_{\ell^2}(p, Med(A)).
\end{equation}
Here $d_{\ell^2}$ is the Euclidean distance: $d_{\ell^2}(p, q) = \|p - q\|_2$, and for a set $A$, $d_{\ell^2}(p, A) = \inf_{q\in A} d_{\ell^2}(p, q)$. Intuitively, $\tau_A$ is the largest radius such that every point within distance $<\tau_A$ of $A$ has a unique nearest neighbor in $A$. For a compact $d$-dimensional submanifold $M \subset \mathbb{R}^D$, the reach can be expressed as
\begin{equation}
\tau_M = \inf_{p\neq q\in M} \frac{\|p-q\|^2}{2\, d_{\ell^2}(q-p, T_pM)},
\end{equation}
where $d_{\ell^2}(v,T_pM)$ is the distance of a vector $v$ to the tangent space at $p$.
\end{definition}

In our applications, we consider both \emph{global} quantities (total volume, the scalar curvature integral\footnote{The integral of the scalar field defined by the scalar curvature $R$ on $M$ over the volume of $M$, i.e.\ $\int_M R\ dV$.}, global reach) and their \emph{local} counterparts (volume element, pointwise scalar curvature, and local reach estimators as in \cite{aamari2019estimating}).

\begin{definition}[Hausdorff distance]
Let $A,B \subset \mathbb{R}^D$ be non-empty subsets. The \emph{Hausdorff distance} between $A$ and $B$ is
\begin{equation}
H(A,B) = \max\Big\{\,\sup_{a\in A} d_{\ell^2}(a, B),\; \sup_{b\in B} d_{\ell^2}(b, A) \Big\}.
\end{equation}
\end{definition}

If $M$ is the ground-truth data manifold and $\hat M$ a learned approximation (e.g., via an autoencoder), then $H(M,\hat M)$ expresses the largest geometric error, i.e.\ how far the manifolds can be from each other at any single point.

\noindent\textbf{Manifold Fitting and Bounds:}
\label{sec:bounds}

Manifold learning and generative modeling are closely related: both assume high-dimensional data lie near a low-dimensional manifold. Generative models learn a mapping between the data manifold in $\mathbb{R}^D$ and a lower-dimensional latent space where factors of variation are (ideally) disentangled and sampling is easier. If the encoder–decoder is sufficiently regular, composing the encoder with the manifold charts $u_i$ gives a parametrization of the latent manifold. Data points map to latent representations, and decoding maps them back to the estimated data manifold.

We focus on two theoretical results that provide bounds on the error of manifold fitting in terms of sample size, intrinsic dimension, and geometric properties of the manifold. Before stating these results, we define the normal fiber of radius $r$ at a point $p\in M$ as $L_r(p) = (p + T^{\perp}_p M) \cap B_D(p, r)$, where $T^{\perp}_p M$ is the orthogonal complement of $T_pM$ (viewed as a linear subspace of $\mathbb{R}^D$) and $B_D(p, r)$ is the $D$-dimensional ball of radius $r$ centered at $p$. Intuitively, this fiber is a $(D-d)$-dimensional ball extending away from the manifold at $p$.

\noindent\textbf{Minimax manifold estimation bound:}
\cite{genovese2012minimax} establish minimax rates for estimating a manifold from samples corrupted by small normal noise. Their result shows that the intrinsic dimension $d$ alone governs the difficulty of estimation.

\begin{theorem}[\cite{genovese2012minimax}]
Let $\mathcal{M}(\tau)$ be the class of compact, smooth, boundaryless $d$-dimensional manifolds embedded in $\mathbb{R}^D$ with reach at least $\tau$. Fix $0<\sigma<\tau$. For each $M \in \mathcal{M}(\tau)$, let $Q_M$ be the distribution of $Y = \xi + Z$, where $\xi$ is uniformly distributed on $M$ and, conditional on $\xi$, $Z$ is uniformly distributed on $\{z\in T^{\perp}_{\xi}M : \|z\|_2 \le \sigma\}$. Let $\mathcal{Q} = \{Q_M \mid M \in \mathcal{M}(\tau)\}$. For an $n$-sample estimator $\widehat{M} : (\mathbb{R}^{D})^{n} \to \mathcal{M}$, define the minimax risk
\begin{equation}
R_n(\mathcal{Q}) = \inf_{\widehat{M}} \sup_{Q \in \mathcal{Q}} \mathbb{E}_Q[H(\widehat{M}, M)],
\end{equation}
where $H(\cdot,\cdot)$ denotes the Hausdorff distance. Then there exist constants $C_1, C_2 > 0$ such that
\begin{equation}
C_1 \left(\frac{1}{n}\right)^{\frac{2}{2+d}}
\leq
R_n(\mathcal{Q})
\leq
C_2 \left(\frac{\log n}{n}\right)^{\frac{2}{2+d}}.
\end{equation}
\end{theorem}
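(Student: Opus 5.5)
The argument splits into a lower bound via Le Cam's two-point method and an upper bound via an explicit, possibly computationally infeasible, sieve estimator. Throughout, constants may depend on $d$, $D$, $\tau$ and $\sigma$ but not on $n$.

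\textbf{Lower bound.}
1. Fix a base manifold $M_0\in\mathcal{M}(\tau)$ whose reach is strictly larger than $\tau$, for instance a sphere of suitable radius.
2. Build $M_1$ by adding a smooth bump of height $\gamma$ and width $\gamma^{1/2}$ in a normal direction near one point. A bump of height $\gamma$ over width $w$ has curvature of order $\gamma/w^2$. With $w\asymp\gamma^{1/2}$ this is $O(1)$, so a small constant in front keeps $M_1\in\mathcal{M}(\tau)$.
3. Note that $H(M_0,M_1)\asymp\gamma$.
4. Bound the total variation distance between $Q_{M_0}$ and $Q_{M_1}$. Both are uniform on tubes of radius $\sigma$, and these tubes differ only on a region of measure of order $\gamma\cdot w^{d}=\gamma^{(2+d)/2}$. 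Hence
\[
\mathrm{TV}(Q_{M_0},Q_{M_1})\lesssim \gamma^{(d+2)/2}.
\]
5. Pass to $n$ samples: $\mathrm{TV}(Q_{M_0}^n,Q_{M_1}^n)\le n\,\mathrm{TV}(Q_{M_0},Q_{M_1})$, so this stays bounded below $1$ when $\gamma\asymp n^{-2/(2+d)}$.
6. Le Cam's lemma then gives $R_n\gtrsim\gamma(1-\mathrm{TV}) \gtrsim n^{-2/(2+d)}$.

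\textbf{Upper bound.}
1. Cover $\mathcal{M}(\tau)$ by a finite sieve $\{M_j\}$, taken as a $\delta$-net in Hausdorff distance with $\delta=(\log n/n)^{2/(2+d)}$. Reach control gives uniform $C^{1,1}$ regularity, so a local-graph representation bounds the log-cardinality by $\delta^{-d/2}$ up to logarithmic factors.
2. Choose $\widehat M$ by a minimum-distance rule, e.g.\ a Scheffé tournament or a sieve maximum-likelihood estimator on the uniform tube densities.
3. Link Hausdorff error to the statistical separation. If $H(M,M')=\epsilon$, the tubes differ by a set of mass at least of order $\epsilon^{(d+2)/2}$. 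The reason is that near the point of maximal deviation, reach $\ge\tau$ forces a region of width $\epsilon^{1/2}$ along $M$ where the normal offset stays of order $\epsilon$.
4. Apply the standard tournament bound: it achieves total variation error of order $\sqrt{\log N/n}$ plus the approximation error. Matching $\epsilon^{(d+2)/2}$ to $\sqrt{\log N/n}$, keeping track of the entropy, yields $\epsilon\asymp(\log n/n)^{2/(2+d)}$.

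\textbf{Main obstacle.} The hardest step is the separation inequality in step 3 of the upper bound,
\[
\mathrm{TV}(Q_M,Q_{M'})\gtrsim H(M,M')^{(d+2)/2},
\]
required uniformly over the class. Proving it needs the reach to give quantitative tubular-neighbourhood geometry, and uses $\sigma<\tau$ so that normal fibers of radius $\sigma$ do not overlap. I would first try to prove it using local graph charts over tangent planes, working at a point that achieves the Hausdorff distance.
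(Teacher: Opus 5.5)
Your outline has the same architecture that the paper attributes to Genovese et al.: a Le Cam two-point lower bound, and for the upper bound a likelihood-type sieve estimator with Hausdorff entropy plus a conversion from statistical to Hausdorff distance. However, the upper bound does not close as written. Matching $\epsilon^{(d+2)/2}$ to $\sqrt{\log N/n}$ gives $\epsilon\asymp(\log N/n)^{1/(d+2)}$. That is the square root of the target rate even if $\log N$ were only $\log n$. With your net, $\log N\asymp\delta^{-d/2}\asymp(n/\log n)^{d/(d+2)}$, and you land near $n^{-2/(d+2)^2}$.

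The defect is structural. Scheff\'e/tournament arguments have tails $e^{-cn\,\mathrm{TV}^2}$. Here, though, the separation comes from the supports, where one density vanishes. So the squared Hellinger distance is already of order $\epsilon^{(d+2)/2}$, and the right per-alternative tail is $e^{-cn\epsilon^{(d+2)/2}}$. You get it from a likelihood-ratio/Hellinger bound, or directly as the probability that a region of $Q_M$-mass $\gtrsim\epsilon^{(d+2)/2}$ receives no sample. The paper's sketch likewise works with MLE concentration in Hellinger distance.

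Even with that tail, a union bound over a global Hausdorff net fails. At $\epsilon\asymp(\log n/n)^{2/(2+d)}$ one has $n\epsilon^{(d+2)/2}\asymp\log n$, so only polynomially many events can be afforded. But $\log N\asymp\epsilon^{-d/2}$ is polynomial in $n$, and a global union bound yields only $\epsilon\asymp n^{-1/(d+1)}$. You need a localization step, e.g.\ reducing $\{H(M,\widehat M)\ge\epsilon\}$ to $\mathrm{poly}(n)$ local events, each of probability $e^{-cn\epsilon^{(d+2)/2}}$; that is where the $\log n$ comes from. Note also that Hausdorff proximity of the net does not make the approximation term $\min_j\mathrm{TV}(Q_M,Q_{M_j})$ small at the required scale $\epsilon^{(d+2)/2}$.

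The lower bound also has a false step. In step 4 you treat $Q_{M_0}$ and $Q_{M_1}$ as uniform on their tubes, but under the model in the statement they are not. Pushing (uniform on $M$)$\otimes$(uniform on the normal disc) forward by $(p,z)\mapsto p+z$ gives a density proportional to $\det(I-A_z)^{-1}$, where $A_z$ is the shape operator. Its first-order term is $\langle z,\vec H(p)\rangle$, with $\vec H$ the mean curvature vector. Your bump has curvature of order one by design. So on the common part of the two tubes above the bump, $q_0$ and $q_1$ differ by a relative amount of order $\sigma$ times that curvature, over a set of mass $\asymp w^d=\gamma^{d/2}$. Hence $\mathrm{TV}(Q_{M_0},Q_{M_1})\gtrsim\sigma\gamma^{d/2}\gg\gamma^{(d+2)/2}$, and step 5 only gives $n^{-2/d}$.

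Changing the width does not help. For height $\gamma$ and width $w$ the interior term is of order $\sigma\gamma w^{d-2}$, which exceeds $\gamma^{(d+2)/2}$ for every admissible $w\in[\gamma^{1/2},1]$. Reshaping does not help either, since a compactly supported perturbation $f$ cannot have $\Delta f\equiv 0$. Your picture of two tubes differing only on a thin region is valid for the variant in which $Q_M$ is the uniform law on the $\sigma$-tube $\mathrm{Tub}_\sigma(M)$. For the model as stated, you would need an additional idea that neutralizes this interior curvature signal.
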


\noindent
This result shows that the sample complexity depends exponentially on the intrinsic dimension $d$, but not on the ambient dimension $D$. Intuitively, the distributions $Q_M$ correspond to sampling a point on $M$ and perturbing it orthogonally within its reach. In our setting, an estimator $\widehat{M}$ may be interpreted as a neural network (e.g., an autoencoder) that outputs a fitted manifold.

\noindent\textbf{Bounds from Fitting a putative manifold to noisy data:}
\cite{fefferman2018fitting} give an upper bound for the Hausdorff distance between the true manifold $M$ and an estimated manifold $M_o$ produced by their algorithm. Specializing their bound to the low-noise regime (or the noiseless case) yields the scaling
\begin{equation}
H(M_o, M) \;<\;
C_1\left(\frac{\log n}{n}\right)^{\frac{1}{d}},
\end{equation}
where $C_1$ depends on geometric parameters of the class (e.g.\ volume and sampling density; see Appendix~\ref{appendix:fefferman-bounds} for details).

\noindent\textbf{The exponents of the bounds:}
In the noise-free (or small-noise) regime, \cite{aamari2019nonasymptotic} provide rates for manifold estimation under Hausdorff loss that depend on the smoothness order $k$. In particular, for manifolds in $\mathcal{C}^k$ (with $k\ge 3$), their results yield the scaling
\begin{equation}
R_n(\mathcal{Q}) \;<\;
C_1\left(\frac{\log n}{n}\right)^{\frac{k}{d}}.
\end{equation}
Beyond improving rates, their results suggest that the exponent is tied to the order of the local fit: roughly speaking, a linear, e.g.\ PCA-based, method corresponds to an exponent $1/d$, while a quadratic local fit corresponds to an exponent $2/d$.

In our experiments, we compare the dimension-dependent minimax scaling of \cite{genovese2012minimax} with the linear-fit scaling suggested by \cite{fefferman2018fitting}.

\setlength{\abovecaptionskip}{0pt}
\setlength{\belowcaptionskip}{0pt}

\begin{table*}
\centering
\caption{Dataset characteristics and parametrizations used in our experiments:}
\label{tab:datasets}
\begin{tabular}{cccccc}
\toprule
\textbf{Dataset} & \textbf{$d$} & \textbf{$D$} & \textbf{Factors} & \textbf{Range} & \textbf{Components} \\
\midrule
$S^1$ & 1 & 2 & $\phi_1$ & $[0, 2\pi)$ & 1 \\
Two moons & 1 & 2 & $\phi_1$ & $[0, \pi)$ & 2 \\
$S^2$ & 2 & 3 & $\phi_1, \phi_2$ & $[0, \pi)\times[0, 2\pi) $ & 1 \\
$T^2$ & 2 & 3 & $\phi_1, \phi_2$ & $[0, 2\pi) \times [0, 2\pi)$ & 1 \\
\midrule
dSprites & 4 & $4096$ & scale, orientation, pos. x, pos. y & see Appendix~\ref{appendix:datasets} & 3 \\
COIL-20 & 3 & $4096$ & horiz. orient., scale, img. orient. & see Appendix~\ref{appendix:datasets} & 20 \\
\bottomrule
\end{tabular}
\end{table*}

\section{Methods and Framework}

We construct controlled synthetic manifolds of low intrinsic dimension ($d=1$–$4$) sampled on regularly spaced grids. Dense sampling enables stable finite-difference approximations of partial derivatives, allowing accurate computation of the induced metric, volume element, curvature tensors, and reach. This framework provides a setting to empirically assess theoretical manifold-fitting bounds and a reproducible dataset suite for validating geometric estimators.

\noindent\textbf{Datasets with a dense grid structure:}
We consider two complementary approaches for constructing low-dimensional datasets with a grid structure.

For analytically defined manifolds, we generate a regular grid using known parametrizations. For image-based or domain-specific datasets, we create an analogous structured grid by systematically applying transformations (translations, rotations, scalings) and sampling all combinations.

Grid sampling is dense but not necessarily uniform in intrinsic geometry. To obtain more uniform subsets, we use:
\begin{itemize}
\item Iterative farthest-point sampling on the grid to select well-separated points.
\item Sampling according to the volume form, either by reweighting grid points or, when analytic expressions are available, via inverse transform sampling using numerical integration, which becomes costly as $d$ grows.
\end{itemize}

For experiments, we fix a uniformly distributed test subset of grid points and use the remainder for training with varying sizes. This yields clear train/test separation while keeping the test set representative for approximating Hausdorff and average distances. Geometric measures are computed on the full dataset.

\noindent\textbf{Computation of geometric measures:}
To estimate geometric quantities such as the volume element, scalar curvature, and reach, we use finite-difference approximations of partial derivatives on the dense grid. We assume $\mathcal{C}^3$ smoothness for estimating the reach and volume and $\mathcal{C}^5$ smoothness for estimating scalar curvature, enabling second-order finite-difference approximations of the required first- and third-order derivatives.

Given a parametrization $u : \mathbb{R}^d \to \mathbb{R}^D$, the central difference approximation
\begin{equation}
f'(x) = \frac{f(x+h)-f(x-h)}{2h} + O(h^2)
\end{equation}
extends coordinate-wise. For example,
\begin{equation}
\frac{
u(x_1,\ldots,x_i+h,\ldots,x_d)
-
u(x_1,\ldots,x_i-h,\ldots,x_d)
}{2h},
\end{equation}
gives a second-order accurate estimate of the $i$-th partial derivative $D^c_{h,i} u(x)$. Products of such approximations yield $g_{ij} = \langle u_{,i}, u_{,j} \rangle + O(h^2)$ and the entries of $g^{-1}$ inherit the same order provided the metric is uniformly non-degenerate, e.g. $\lambda_{\min}(g)\ge c>0$ on the chart. Furthermore, by the $\mathcal{C}^5$ smoothness, all partial derivatives of order up to $3$ also have $O(h^2)$ error, therefore all the volume form, Christoffel symbols, curvature tensors and scalar curvature have $O(h^2)$ error. These intermediate tensors are available within the framework for downstream analysis. 
Concretely, the pointwise volume element and scalar curvature satisfy
\(|\hat v-v|=O(h^2)\) and \(|\hat R-R|=O(h^2)\) and the corresponding global
volume estimate inherits this rate under the quadrature scheme used here.
For reach, we use the plug-in estimator of \cite{aamari2019estimating} with tangent spaces \(T_x\) estimated from the grid by finite differences.
Unlike volume and curvature, reach is a minimum over point pairs, so its
convergence is not controlled only by the finite-difference accuracy of the
local derivatives. In the setting with known tangent spaces, Aamari et al.
distinguish a global-bottleneck regime with rate \(O(n^{-1/d})\) from a
slower local-curvature regime with upper rate \(O(n^{-2/(3d-1)})\) for
inverse reach, where \(n\) is the total number of sample points. Since the
reach is bounded away from zero in our setting, inverse-reach and reach
errors are equivalent up to constants.

For more details on the derivation of the approximations of the curvature-related measures, please look at App. \ref{appendix:sample_complexity}.

\noindent\textbf{Comparison to general estimation methods:}

General estimators of curvature and reach from point clouds, such as those of \cite{aamari2023optimal} for reach and \cite{aamari2019nonasymptotic} for curvature, operate in far broader settings than ours. They must infer derivatives from unstructured samples via sophisticated interpolations, leading to more complex rates, nontrivial constants, and substantial computational overhead. Unfortunately, few implementations are currently available.

On a quasi-uniform \(d\)-dimensional grid with \(n\) total points, the
per-axis grid spacing satisfies $h \asymp n^{-1/d}$.
Thus the global-bottleneck and local-curvature rates correspond respectively
to
\[
O(n^{-1/d}) = O(h)
\text{ and }
O(n^{-2/(3d-1)}) = O(h^{2d/(3d-1)}).
\]
Our finite-difference tangent estimates have error \(O(h^2)\). Using the
tangent-perturbation stability bound of \cite{aamari2019estimating}, this contributes a
term of order \(O(h)\) on a grid with minimum separation \(\delta\asymp h\).
Hence, in the conservative local-curvature regime, the dominant term remains
\[
|\hat{\tau}-\tau|
=
O(n^{-2/(3d-1)})
=
O(h^{2d/(3d-1)}),
\]
up to constants and logarithmic factors.

\setlength{\abovecaptionskip}{0pt}
\setlength{\belowcaptionskip}{0pt}
\begin{figure}[h]
    \centering
    \includegraphics[width=0.9\linewidth]{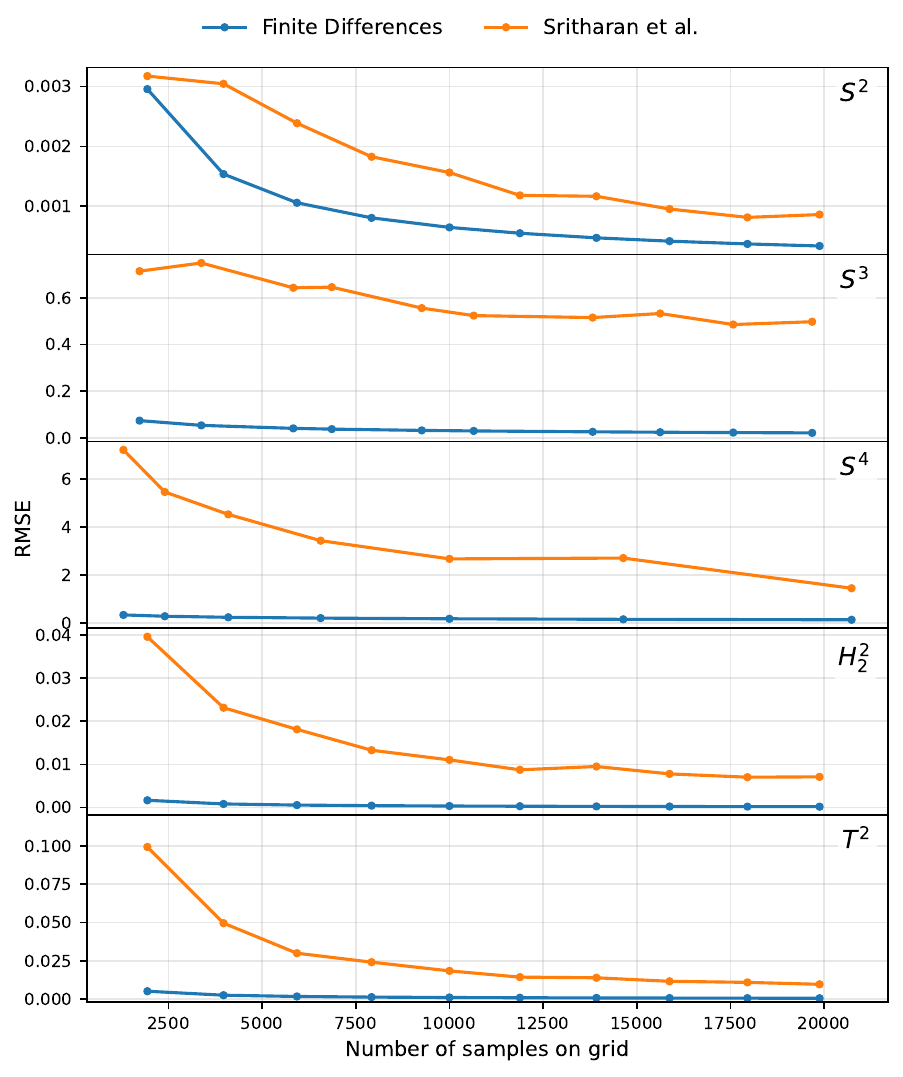}
    
    \caption{Scalar-curvature RMSE (lower is better) versus number of grid samples, comparing our finite-difference estimator (blue) with \cite{sritharan2021computing} (orange) on hyperspheres of multiple dimensions $S^2, S^3, S^4$, the hyperboloid $H^2_2$, and the torus $T^2$.}
    \label{fig:curvature-comparison}
\end{figure}

\setlength{\abovecaptionskip}{0pt}
\setlength{\belowcaptionskip}{0pt}
\begin{figure*}
  \centering
    \includegraphics[width=\textwidth, trim=0cm 0cm 0cm 0.25cm, clip]{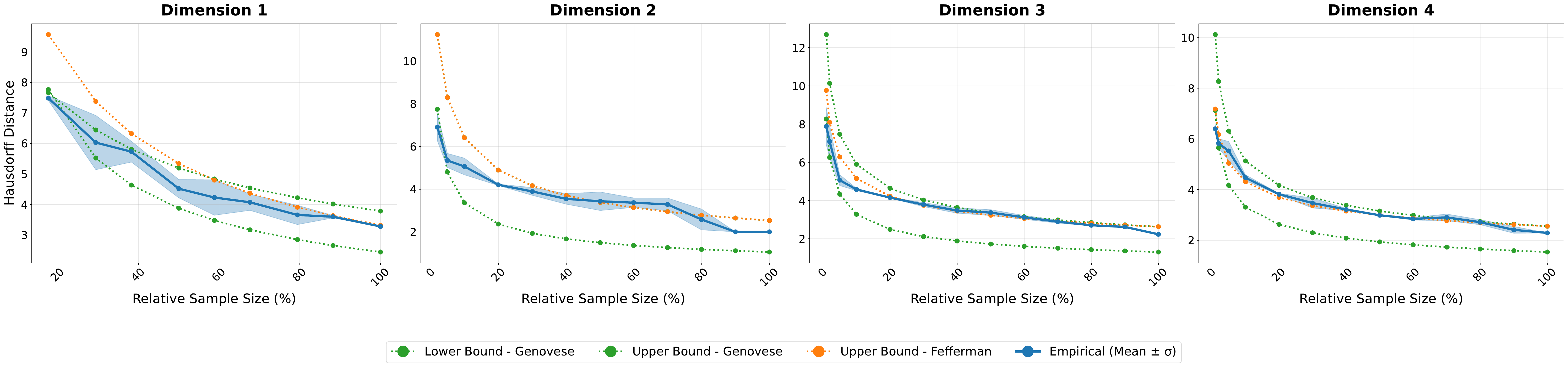}
    \caption{Empirical Hausdorff error against theoretical bounds on dSprites for d = 1 to 4, using MMLS (linear local fit). Empirical mean ± $\sigma$ over 3 repetitions in blue, x-axis is training-set fraction (\%). Genovese lower bound in dotted green, Genovese upper in solid green, Fefferman upper in orange. Constants fitted to envelope the empirical curve. At d = 2 the Genovese upper and Fefferman bounds overlap.}
  \label{fig:dsprites_dim_bounds}
\end{figure*}

For scalar curvature, our intrinsic finite-difference estimator gives $|\hat{R} - R| = O(h^2) = O((1/n)^{2/d})$. The theoretical rate of \cite{aamari2019nonasymptotic} is $O((\log n / n)^{3/d})$ for $\mathcal{C}^5$ manifolds, which is slightly tighter. This gap is due to our intrinsic computation of the Riemann curvature tensor, which requires third-order derivatives. An extrinsic formulation via the second fundamental form would reduce our smoothness requirement to $\mathcal{C}^4$ and thus improve the corresponding $O((\log n / n)^{2/d})$ theoretical rate of \cite{aamari2019nonasymptotic}. For details on the derivation of the sample complexities, see appendix \ref{appendix:sample_complexity}.

As a sanity check, we compare the finite-difference approach with the scalar-curvature estimation method introduced in \cite{sritharan2021computing} on the manifolds of known curvature benchmarked in their paper. For each manifold and for different numbers of samples, we run both approaches and compare their RMSE against the known curvature; see Fig. \ref{fig:curvature-comparison}. For the method of \cite{sritharan2021computing}, we additionally performed a grid search over the sensitive radius hyper-parameter and (optimistically) report the per-point oracle choice, i.e.\ the radius minimizing the error at each point. Nevertheless, the finite-difference method, aware of the grid structure, yields substantially more accurate estimates.

Since such ground-truth values are not available for the image-based datasets, we separately study their sensitivity to grid resolution in Appendix~\ref{appendix:measure_convergence_on_image_datasets} using convergence across consecutive grid refinements.

Please note again, our aim is not to compete with general estimators. Instead, the controlled grid setting provides a straightforward and reproducible way to compute geometric quantities at near-optimal accuracy. The resulting measures serve as reliable benchmarks and unit tests for developing and analyzing more general methods on unstructured data.

\setcounter{figure}{4} 
\begin{figure*}[b]
    \centering
    \includegraphics[width=\linewidth, trim=0cm 0cm 0cm 0cm, clip]{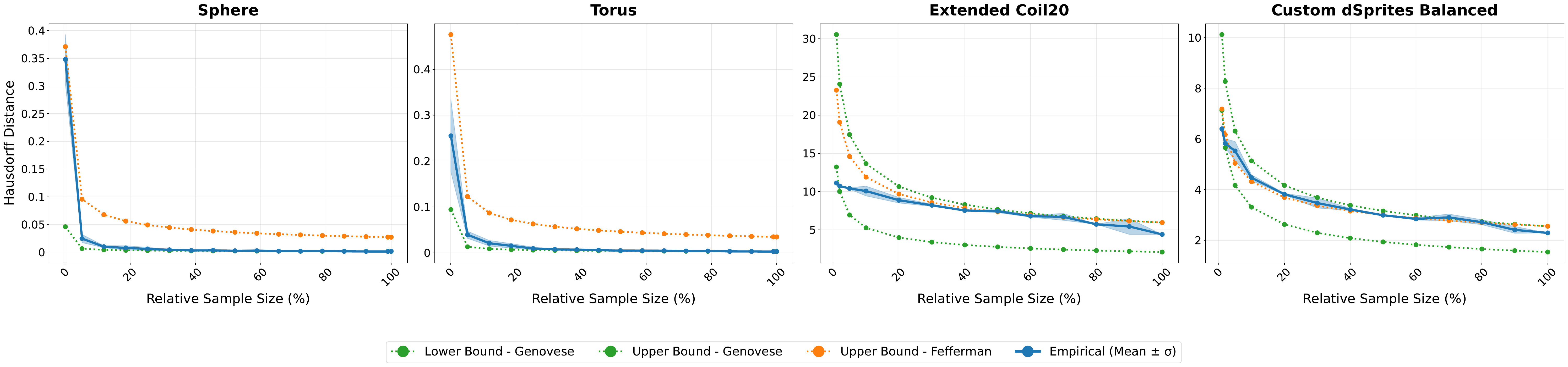}
    \caption{Hausdorff error and bounds for MMLS on, from left to right, Sphere $S^2$, Torus $T^2$, extended COIL-20, and balanced dSprites. Empirical mean ± $\sigma$ over 3 repetitions, x-axis is training-set fraction (\%). Please note that for $S^2$ and $T^2$, the Genovese upper and Fefferman bounds are the same for $d=2$ since we use the same fitting process for their respective multiplicative constants. 
    }
    \label{fig:dataset}
\end{figure*}

\setcounter{figure}{3} 
\begin{figure}[h]
  \centering

    \includegraphics[width=\linewidth, trim=0cm 0cm 0cm 0cm, clip]{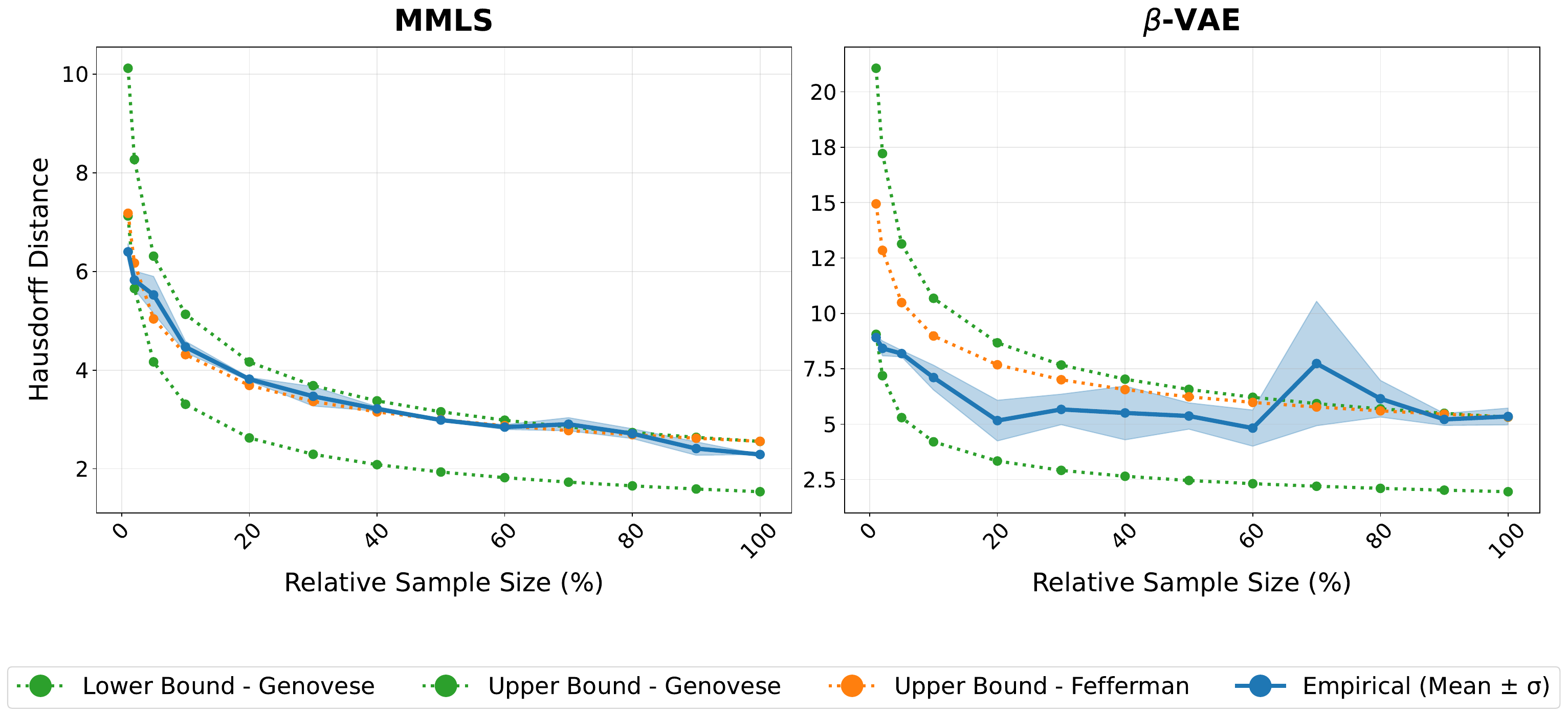}
    \caption{Hausdorff error and bounds on 4D dSprites for MMLS (left) and $\beta$-VAE (right; model B, latent dim 10, $\beta$ = 4). Empirical mean ± $\sigma$ over 3 repetitions. Bounds as in Figure 3. }
    \label{fig:models}
\end{figure}

\section{Applications}

\noindent\textbf{Datasets:}
Our experimental setup employs two types of datasets: (i) toy manifolds with explicit mathematical parametrizations for analytical validation, and (ii) adapted versions of established image datasets (dSprites \cite{dsprites17} and COIL-20 \cite{nene1996columbia}) that provide controlled settings for empirical evaluation. Table~\ref{tab:datasets} summarizes the key characteristics of each dataset.

The toy manifolds provide analytically tractable benchmarks with closed-form geometric properties. For the image datasets, we adapted the sampling to better support geometric computations: we regenerated dSprites at the same resolution with improved anti-aliasing, and extended COIL-20 with additional transformations such as scale and orientation. Both datasets use dense grid sampling with margin oversampling on non-cyclic dimensions to support finite-difference estimates of geometric measures. Details are in Appendix~\ref{appendix:datasets}.

\noindent\textbf{Manifold fitting methods:}
We consider two complementary approaches to fit manifolds to the datasets: a geometric method, \emph{Manifold Moving Least Squares} (MMLS) \citep{sober2020manifold}, and a deep learning method, the $\beta$-VAE autoencoder \citep{higgins2017beta}.

\setcounter{figure}{5} 
\begin{figure*}[t]
    \centering
    \includegraphics[width=\linewidth]{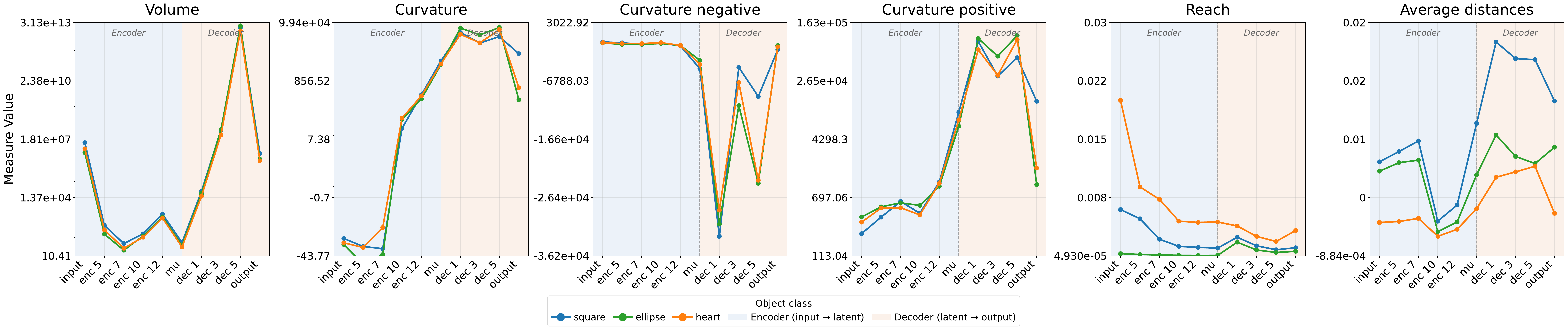} \\
    \includegraphics[width=\linewidth]{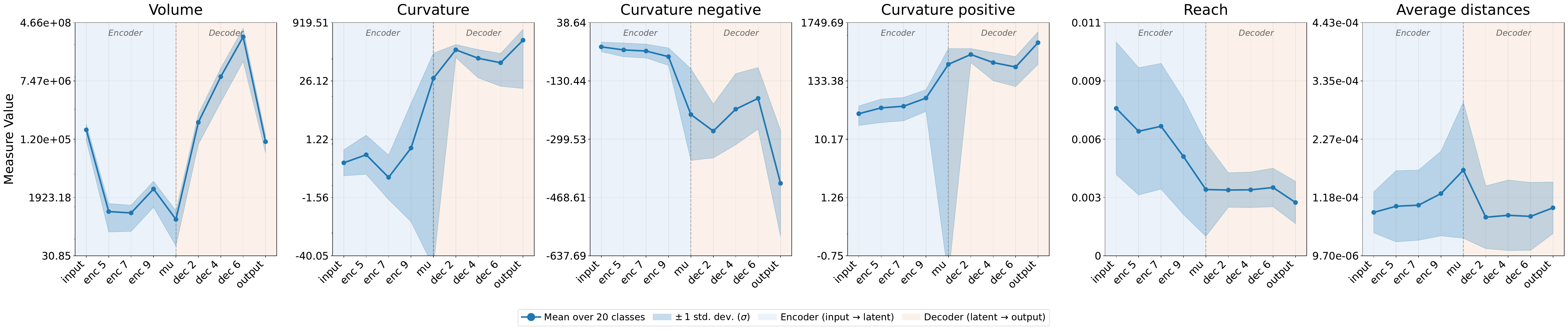} 
\caption{\textbf{Evolution of manifold geometry across the layers of a $\beta$-VAE.}
  For each layer (input $\rightarrow$ encoder $\rightarrow$ latent $\mu \rightarrow$ decoder
  $\rightarrow$ output; encoder and decoder halves shaded blue and orange) we report six
  geometric measures: volume, scalar curvature, its negative and positive parts, reach,
  and average pairwise distance (all normalized except volume).
  \emph{Top:} dSprites, one curve per object class (square, ellipse, heart).
  \emph{Bottom:} COIL-20, summarized as the mean over the 20 object classes
  (solid line) with a $\pm 1\sigma$ band (shaded: one standard deviation across classes).
  The full per-class COIL-20 version is given in Fig.~\ref{fig:coil-full}.}
  \label{fig:layer-geometry}
    \label{fig:manifold_layers}
\end{figure*}

\textit{MMLS:}  
MMLS \cite{sober2020manifold} is a local manifold approximation method in which, for each query point, a neighborhood is weighted by a kernel function and a $d$-dimensional affine space together with a local polynomial are fitted by weighted least squares. The projection onto the approximated manifold is then obtained by projecting onto the fitted $d$-plane and evaluating the polynomial correction, yielding a flexible higher-order local model of the manifold.

In our setting, the ambient dimension is too high to reliably estimate local polynomials. We therefore customize the procedure by restricting the fit to the weighted $d$-plane alone. More details on the method and our usage are in Appendix~\ref{appendix:mmls}.

\textit{$\beta$-VAE:}  
The $\beta$-VAE \cite{higgins2017beta} provides a learning-based approach. It maps input data into a low-dimensional latent space and reconstructs them, with the reconstruction lying on a learned manifold. The reconstruction distance can thus be interpreted as the distance to this manifold. As the input dimensionality is too small for meaningful bottlenecks in toy datasets, we employ $\beta$-VAE only for image datasets (dSprites, COIL-20). We use the off-the-shelf hyperparameter configuration of \cite{higgins2017beta}. For more details, refer to Appendix~\ref{appendix:beta-vae}.

\noindent\textbf{Sampling protocol:}  
For both methods, a fixed test set of 500 uniformly spread points is used. On the toy datasets, training sets range from 5--500 points, and each experiment is repeated 20 times. On dSprites and COIL-20, data ratios range from $0.01$ up to $1.0$, with three repetitions per setup.

The motivation for considering both methods is that MMLS directly fits a manifold in the original data space, whereas $\beta$-VAE learns a latent manifold with richer semantic structure. Together, they provide complementary views of manifold fitting performance.

\noindent\textbf{Error bounds evaluation:}
We now turn to the first application of manifold fitting: the empirical evaluation of theoretical bounds on approximation error. Recall from Section~\ref{sec:bounds} that we consider the upper and lower bounds from \cite{genovese2012minimax} and the upper bound from \cite{fefferman2018fitting}. For the fitting methods (MMLS, $\beta$-VAE), each dataset (toy, dSprites, COIL-20), and a range of intrinsic dimensions, we fit manifolds for multiple training set sizes. As an evaluation metric we use the directed Hausdorff distance from the ground-truth manifold to the fitted manifold, $\sup_{x\in M} d_{\ell^2}(x,\widehat M)$, which provides a computable proxy for $H(M,\widehat M)$; the reverse direction is discussed in Appendix~\ref{appendix:mmls}.

\noindent\textbf{Comparing to bounds:}
To estimate the constants appearing in the theoretical upper and lower error bounds, we avoid attempting to compute these constants directly. Instead, we determine them empirically by aligning the theoretical shapes of the bounds with the observed error curve. For a full justification, see Appendix~\ref{appendix:genovese-constants-takeaways}.

The theoretical bounds scale like $C_1 (1/n)^{2/(2+d)}$ and $C_2 (\log n / n)^{2/(2+d)}$, which suggests that the true error curve should be well approximated by a power law of the form $R_n \sim C n^{-g(d)}$. Taking logarithms yields a linear model $\log R_n \sim -A \log n + \log C$. We therefore regress $(\log n_j, \log \hat{R}_{n_j})$ for a sequence of sample sizes $n_j$ covering the fractions $\{0.01, 0.02, 0.05, 0.1, \dots, 1.0\}$ of the available fitting pool. Each experiment is repeated three times, and the empirical values $\hat{R}_{n_j}$ used for the fit are the pointwise averages. The regression provides estimates $\hat{A}$ and $\hat{C}$, giving a fitted curve $\hat{R}^{\mathrm{fit}}_n = \hat{C} n^{-\hat{A}}$.

We then use this fitted curve to select constants for the upper and lower theoretical bounds. For each $n_j$, we compute the $0.99$ quantile of the empirical errors $\hat R_{n_j}$ across repetitions. The upper bound constant is chosen as the smallest value for which the bound stays above these quantiles; the lower bound constant is chosen as the largest value for which the bound stays below them. This procedure yields stable and reproducible constants while preserving the theoretical scaling behavior. For more details and plots related to the logarithmic regression fitting, please look at Appendix~\ref{appendix:estimating-bound-constants}.

\noindent\textbf{Results:} We present three representative comparisons:
\begin{itemize}
  \item \textbf{Cross-dimension} (Figure~\ref{fig:dsprites_dim_bounds}): curves for dSprites with intrinsic dimensions $d=1,2,3,4$, again using MMLS.
  \item \textbf{Cross-model} (Figure~\ref{fig:models}): comparison of MMLS and $\beta$-VAE on 4D dSprites.
  \item \textbf{Cross-dataset} (Figure~\ref{fig:dataset}): curves for sphere, torus, COIL-20 and dSprites using MMLS.
\end{itemize}

\noindent\textbf{Observations:}
Across datasets and models, we find that the scaling implied by \cite{fefferman2018fitting} is closer to the empirical error curves than the dimension-only rates of \cite{genovese2012minimax}. 
Here ``closer" refers to the fitted log-log decay exponent, not to predictive constants or absolute curve height, since all theoretical constants are selected post hoc to envelope the empirical curves.
This is consistent with our use of MMLS in a linear regime: we restrict MMLS to a local $d$-plane fit (PCA), and the nonasymptotic results of \cite{aamari2019nonasymptotic} suggest that such linear local fitting should exhibit an exponent close to $1/d$ in the low-noise setting. The $\beta$-VAE curve is comparatively flat, with a decay exponent close to 1/d or smaller. In this configuration, the $\beta$-VAE is not behaving like a high-order local fitter on our controlled manifolds, which in turn motivates the kinds of controlled sweeps over architectures, $\beta$ values, and training budgets that the framework is designed to enable. 

\noindent\textbf{Manifold analysis:}
In the second application, we analyze how geometric properties of data manifolds evolve through the layers of a $\beta$-VAE using the dSprites (4D) and COIL-20 (3D) datasets. The purpose is to show that the benchmark can produce per-layer ground-truth geometric measurements that are otherwise difficult to obtain. For each layer we compute the manifold volume, integrated scalar curvature (with positive and negative parts separated), reach, and the average distance between class manifolds (Figure~\ref{fig:manifold_layers}). To better illustrate the behavior of the image-based finite-difference estimates, and in particular the sensitivity of curvature to rasterization at fixed image resolution, we also analyze volume and curvature on a controlled ellipse example in Appendix~\ref{appendix:ellipse_rasterization_metric_smoothing}.

We observe that curvature systematically increases while reach decreases in deeper layers, indicating that the intermediate manifolds become progressively more intricate and approach self-intersections. At the same time, average distances between class manifolds increase toward the latent representation, suggesting that as semantic information strengthens, the network prioritizes class separation over preserving low-level visual transformations.

\section{Discussion}
We presented a framework for constructing dense manifolds and accurately estimating their geometric properties, enabling controlled studies of manifold fitting. We illustrated two use cases: assessing existing manifold fitting bounds via log-log scaling fits, and analyzing how data geometry evolves across layers of a $\beta$-VAE. Both revealed how geometric structure influences learning.

The main strength of this framework is its ability to provide ground-truth geometric quantities that are otherwise inaccessible or unreliable. Unlike real datasets, where assumptions cannot be verified, or simple analytic manifolds, which often lack representational richness, our synthetic constructions balance realism and control. This makes them well suited for probing theoretical assumptions, validating estimators, or stress-testing bounds under controlled perturbations.

Several limitations should nonetheless be emphasized:
\begin{itemize}
    \item The framework can analyze only manifolds of low dimension (up to 4-5). It is therefore intended for benchmarking and understanding rather than for analyzing arbitrary datasets.
    \item It currently supports manifolds with simple topology, $[0,1]^r \times (S^1)^s$. More general manifolds can still be handled by covering them with charts and applying the framework chart-wise.
    \item For rasterized image datasets, very fine transformation grids can introduce aliasing artifacts if image resolution is fixed. Grid refinement should therefore be paired with sufficient image resolution or regularization such as metric-tensor smoothing.
\end{itemize}

Future work could expand the framework or use it for further controlled experiments. Expanding the dataset suite with richer transformations, occlusions, and additional modalities (e.g., text, audio) would broaden applicability. Another possibility is to evaluate existing geometric estimators, e.g. curvature or reach, by comparing them against the more accurate finite-difference values on the test manifolds. The framework could also be used to study discriminative bounds, where classifiers fit functions on manifolds, and to quantify how geometric fidelity relates to generalization.

\section*{Impact Statement}
This paper presents work whose goal is to advance the understanding of machine learning methods and manifold analysis. While there are many potential societal consequences of our work, we do not expect immediate societal impact outside of the community on manifold learning.

\nocite{*}

\bibliography{bibliography}
\bibliographystyle{icml2026}

\newpage
\appendix
\onecolumn

\appendix

\section{Bounds from Fitting a putative manifold to noisy data \cite{fefferman2018fitting}}\label{appendix:fefferman-bounds}

Here we briefly derive the sample complexity of the bound described in \cite{fefferman2018fitting}.
There, they assume $M$ is a $\mathcal{C}^2$ $d$-dimensional manifold in $\mathbb{R}^D$ with reach at least $\tau$
and without boundary. The points are sampled i.i.d.\ from a measure $\mu$ on $M$ such that
\[
0<\rho_{\min}<\frac{d\mu}{d\lambda_{M}}<\rho_{\max}<\infty,
\]
where $\lambda_{M}$ is the $d$-dimensional Hausdorff measure on $M$.
Additionally, i.i.d.\ variables following a $D$-dimensional isotropic Gaussian of standard deviation $\sigma$
are added as noise to the points.
Let $V = \lambda_M(M)$ denote the volume of $M$.

Based on Theorem~10 of \cite{fefferman2018fitting}, their algorithm constructs an output manifold $M_o$
with the same properties as $M$ such that,
with high probability,
\[
H(M_o, M) < C d^7\frac{r^2}{\tau}.
\]
Here $r$ can be set to any value in $[\sqrt{\sigma\tau}D^{1/4}, \frac{\tau}{Cd^C}]$ which satisfies
\[
\frac{n}{\log n} > \frac{CV}{\rho_{\min}\omega_d(r^2/\tau)^d}.
\]
Solving for the term $\frac{r^2}{\tau}$ appearing in the Hausdorff bound, we get
\[
\frac{r^2}{\tau} >
\left(\frac{CV}{\rho_{\min}\omega_d}\right)^{\frac{1}{d}}
\left(\frac{\log n}{n}\right)^{\frac{1}{d}}.
\]
Moreover, from the lower end of the admissible range,
\[
\frac{r^2}{\tau} > \sigma\sqrt{D}.
\]
Combining these constraints yields
\[
H(M_o, M) < Cd^7\max\!\left\{\sigma\sqrt{D},\;
\left(\frac{CV}{\rho_{\min}\omega_d}\right)^{\frac{1}{d}}
\left(\frac{\log n}{n}\right)^{\frac{1}{d}}\right\}.
\]
Assuming small, or no, added noise, this simplifies to: for some constant $C_1$,
\[
H(M_o, M) < C_1\left(\frac{\log n}{n}\right)^{\frac{1}{d}}.
\]

\section{Derivation of sample complexity for scalar curvature}\label{appendix:sample_complexity}

Here we provide detailed derivations of the asymptotic sample complexity of the finite-difference methods we use.

\subsection{Useful lemmas for higher-order central differences}

We define the central-difference operator in the form used throughout the paper. Note that the more common
definition $f(x+\tfrac{h}{2})-f(x-\tfrac{h}{2})$ is equivalent to ours after replacing $h$ by $h/2$.

\begin{definition}[Central difference operator]
For $h>0$, define $\delta_h[f]$ by
\[
\delta_h[f](x) = f(x+h)-f(x-h),
\]
and define higher-order central differences recursively by
\[
\delta_h^{1}[f]=\delta_h[f],\qquad
\delta_h^{k+1}[f] = \delta_h\!\big[\delta_h^{k}[f]\big].
\]
\end{definition}

The following lemmas are standard, see, e.g., \cite{jordan1965calculus}, and we include short proofs for completeness.

\begin{lemma}[Explicit expansion]\label{lem:central-explicit}
For any $k\ge 1$,
\[
\delta_h^{k}[f](x) \;=\; \sum_{i=0}^{k} (-1)^i \binom{k}{i}\, f\!\big(x+(k-2i)h\big).
\]
\end{lemma}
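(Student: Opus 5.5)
The plan is induction on $k$, using only the recursive definition of $\delta_h^{k}$ and Pascal's rule. An equivalent route writes $\delta_h = E - E^{-1}$ with the shift operator $E[f](x)=f(x+h)$ and expands $(E-E^{-1})^k$ by the binomial theorem. That works because $E$ and $E^{-1}$ commute and compose linearly. I will present the inductive version, which avoids any operator formalism.

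\textbf{Base case.} For $k=1$ the right-hand side is $\binom{1}{0}f(x+h)-\binom{1}{1}f(x-h)$. This equals $\delta_h[f](x)$ by definition.

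\textbf{Inductive step.} Assume the formula holds for $k$. By the recursion,
\[
\delta_h^{k+1}[f](x)=\delta_h^{k}[f](x+h)-\delta_h^{k}[f](x-h).
\]
This uses that $\delta_h$ commutes with shifts, or equivalently that $\delta_h\circ\delta_h^k=\delta_h^k\circ\delta_h$, since both are polynomials in $E$. It can also be read directly off the definition $\delta_h^{k+1}[f]=\delta_h[\delta_h^k[f]]$ applied at the point $x$. Substituting the induction hypothesis gives
\[
\sum_{i=0}^{k}(-1)^i\binom{k}{i}f\big(x+(k+1-2i)h\big)-\sum_{i=0}^{k}(-1)^i\binom{k}{i}f\big(x+(k-1-2i)h\big).
\]
In the second sum, reindex $i\mapsto i-1$. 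Its terms then become $(-1)^{i}\binom{k}{i-1}f\big(x+(k+1-2i)h\big)$ for $i=1,\dots,k+1$, with the outer minus sign absorbed into the sign flip.

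Adding the two sums, the coefficient of $f\big(x+(k+1-2i)h\big)$ is $(-1)^i\big[\binom{k}{i}+\binom{k}{i-1}\big]$. We use the conventions $\binom{k}{-1}=\binom{k}{k+1}=0$ to cover the endpoint terms $i=0$ and $i=k+1$. By Pascal's rule this coefficient equals $(-1)^i\binom{k+1}{i}$, which completes the induction.

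The only delicate point is the sign and index bookkeeping in the reindexing, including checking the two boundary terms separately. There is no analytic difficulty: the identity is purely algebraic and holds for any function $f$, with no smoothness needed.
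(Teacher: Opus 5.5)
Your proof is correct, but it takes a more elementary route than the paper. The paper writes $\delta_h = E^{h}-E^{-h}$ with the shift operator $(E^{a}f)(x)=f(x+a)$ and reads the identity off the binomial expansion of $(E^{h}-E^{-h})^{k}$ in one line. That is exactly the alternative you mention and set aside.

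Your induction with Pascal's rule is in effect the proof of the binomial theorem, specialized to these two operators. The two arguments therefore establish the same identity at different levels of packaging.

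\begin{itemize}
\item The operator version is shorter. The same formalism also extends verbatim to products of shifts in different coordinate directions. However, it implicitly relies on the binomial theorem holding in the algebra of linear operators. That requires $E^{h}$ and $E^{-h}$ to commute, which they do, since $E^{h}E^{-h}=E^{-h}E^{h}=\mathrm{id}$. It also uses that $\delta_h^{k}$ is the $k$-fold composition.
\item Your version uses nothing beyond the recursive definition and Pascal's rule. It also makes the boundary terms $i=0$ and $i=k+1$ explicit.
\end{itemize}

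One small remark: under the paper's recursion $\delta_h^{k+1}[f]=\delta_h[\delta_h^{k}[f]]$, the step $\delta_h^{k+1}[f](x)=\delta_h^{k}[f](x+h)-\delta_h^{k}[f](x-h)$ is literally the definition of $\delta_h$ applied to $g=\delta_h^{k}[f]$. So no commutation with shifts is needed there. That comment would matter only if the recursion were written as $\delta_h^{k}[\delta_h[f]]$.
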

\begin{proof}
Let $E^{a}$ denote the shift operator $(E^{a}f)(x)=f(x+a)$. Then $\delta_h = E^{h}-E^{-h}$, hence
\[
\delta_h^{k}=(E^{h}-E^{-h})^{k}=\sum_{i=0}^{k} (-1)^i \binom{k}{i}\, E^{(k-2i)h},
\]
which yields the stated identity after applying both sides to $f$ and evaluating at $x$.
\end{proof}

\begin{lemma}[Moment cancellations]\label{lem:moment-cancel}
For integers $m\ge 0$ and $k\ge 1$, define
\[
S_m(k):=\sum_{i=0}^{k} (-1)^i \binom{k}{i} (k-2i)^m.
\]
Then:
\begin{itemize}
    \item $S_m(k)=0$ for all $m<k$,
    \item $S_k(k)=2^k\,k!$,
    \item $S_{k+1}(k)=0$.
\end{itemize}
\end{lemma}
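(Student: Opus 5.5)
The plan is to read $S_m(k)$ as the $k$-th central difference, in the sense of Lemma~\ref{lem:central-explicit} with $h=1$, applied to the monomial $p_m(x)=x^m$ and evaluated at $x=0$. Indeed, Lemma~\ref{lem:central-explicit} gives
\[
\delta_1^k[p_m](0)=\sum_{i=0}^k(-1)^i\binom{k}{i}(k-2i)^m = S_m(k).
\]
All three claims then follow from how $\delta_1$ acts on polynomials. The key observation is that
\[
\delta_1[x^m] = (x+1)^m-(x-1)^m = 2\sum_{j\ \mathrm{odd}}\binom{m}{j}x^{m-j}.
\]
So $\delta_1$ lowers the degree of a polynomial by exactly one, multiplies the leading coefficient by $2m$, and changes parity: the result contains only monomials whose degree has the opposite parity to $m$. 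The case $m=0$ is immediate, since $\delta_1$ of a constant is $0$.

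\textbf{First claim.} By induction on $k$, $\delta_1^k[p_m]$ is a polynomial of degree $m-k$ when $m\ge k$, and it is identically zero once $k>m$, because a constant is annihilated at the next step. Hence $S_m(k)=0$ for $m<k$.

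\textbf{Second claim.} For $m=k$, the leading coefficients multiply to $2k\cdot 2(k-1)\cdots 2\cdot 1$. Thus $\delta_1^k[p_k]$ is the constant $2^k k!$, and therefore $S_k(k)=2^kk!$.

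\textbf{Third claim.} For $m=k+1$, $\delta_1^k[p_{k+1}]$ has degree $1$. By the parity observation, applying $\delta_1$ $k$ times to the monomial $x^{k+1}$ yields a polynomial containing only monomials of degree $\equiv k+1-k=1 \pmod 2$. Inducting on the number of applications, each application flips the parity of every monomial present. A degree-one polynomial with only odd-degree terms is $c\,x$, which vanishes at $0$, so $S_{k+1}(k)=0$.

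As a cross-check on the third claim, substituting $i\mapsto k-i$ sends $(k-2i)$ to $-(k-2i)$ and multiplies $(-1)^i\binom{k}{i}$ by $(-1)^k$. Hence $S_m(k)=(-1)^{k+m}S_m(k)$, so $S_m(k)=0$ whenever $k+m$ is odd, which covers $m=k+1$ directly.

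The main obstacle is bookkeeping: keeping the parity and degree tracking airtight in the induction, with the base case and the $m=0$ case handled cleanly. Beyond that, the algebra is routine.
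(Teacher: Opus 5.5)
Your proof is correct and follows the same route as the paper: you identify $S_m(k)$ with $\delta_1^k[p_m](0)$ via Lemma~\ref{lem:central-explicit}, then use that $\delta_1$ lowers the degree of a polynomial by one (multiplying the leading coefficient by $2m$) and swaps even and odd polynomials. Your explicit formula for $\delta_1[x^m]$ spells out what the paper calls the ``leading-term computation.'' Your reflection check $i\mapsto k-i$ is an independent argument not in the paper: it shows $S_m(k)=0$ whenever $k+m$ is odd, which gives the third claim directly.
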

\begin{proof}
Consider the polynomial $p_m(t)=t^m$. Using Lemma~\ref{lem:central-explicit} at $x=0$ with $h=1$ gives
\[
\delta_1^k=\sum_{i=0}^{k}(-1)^i\binom{k}{i}(k-2i)^m=S_m(k).
\]
If $m<k$, then $\delta_1^k[p_m] = 0$ because each application of $\delta_1$ lowers polynomial degree by at least one, hence $S_m(k)=0$.
For $m=k$, the polynomial $\delta_1^k[p_k]$ is constant (degree $0$), and its leading-term computation gives
$\delta_1^k[p_k](t)= 2^k k!$, hence $S_k(k)=2^k k!$.
Finally, for $m=k+1$, the polynomial $\delta_1^k[p_{k+1}]$ is linear and odd (since $\delta_1$ maps even polynomials to odd and vice versa), hence it vanishes at $0$, giving $S_{k+1}(k)=0$.
\end{proof}

\begin{lemma}[Central differences approximate derivatives with $O(h^2)$]\label{lem:central-O2}
Let $f:\mathbb{R}\to\mathbb{R}$ be $\mathcal{C}^{k+2}$ in a neighborhood of $x$. Then
\[
\frac{1}{(2h)^k}\,\delta_h^{k}[f](x)
\;=\;
f^{(k)}(x) \;+\; O(h^2),
\qquad h\to 0.
\]
\end{lemma}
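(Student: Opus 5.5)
The plan is to Taylor-expand each term of the explicit formula in Lemma~\ref{lem:central-explicit} around $x$ and then use the moment identities of Lemma~\ref{lem:moment-cancel} to identify the surviving coefficients. Since $f\in\mathcal{C}^{k+2}$ near $x$, for $h$ small enough that all points $x+(k-2i)h$ lie in the neighborhood, Taylor's theorem with Lagrange remainder gives, for each $i$,
\[
f\big(x+(k-2i)h\big)=\sum_{m=0}^{k+1}\frac{f^{(m)}(x)}{m!}\,(k-2i)^m h^m \;+\; \frac{f^{(k+2)}(\xi_i)}{(k+2)!}\,(k-2i)^{k+2}h^{k+2},
\]
with $\xi_i$ between $x$ and $x+(k-2i)h$.

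Substituting this into Lemma~\ref{lem:central-explicit} and exchanging the finite sums, the polynomial part becomes
\[
\sum_{m=0}^{k+1}\frac{f^{(m)}(x)}{m!}\,h^m\,S_m(k).
\]
By Lemma~\ref{lem:moment-cancel}, the terms with $m<k$ vanish, the term $m=k$ contributes $\frac{f^{(k)}(x)}{k!}h^k\,2^k k! = (2h)^k f^{(k)}(x)$, and the term $m=k+1$ vanishes because $S_{k+1}(k)=0$. This last cancellation is what yields second order rather than first order accuracy.

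It remains to bound the remainder. Its absolute value is at most
\[
\frac{h^{k+2}}{(k+2)!}\sup_{|y-x|\le kh}|f^{(k+2)}(y)|\sum_{i=0}^k\binom{k}{i}|k-2i|^{k+2} \;\le\; C_k\,h^{k+2}.
\]
Here $C_k$ depends only on $k$ and on the supremum of $|f^{(k+2)}|$ over a fixed compact neighborhood of $x$, which is finite by continuity. Dividing by $(2h)^k$ then gives
\[
\frac{1}{(2h)^k}\,\delta_h^k[f](x)=f^{(k)}(x)+O(h^2).
\]

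The only delicate point is the bookkeeping. The Taylor expansion must be taken to order $k+1$ with the remainder at order $k+2$, so that $S_{k+1}(k)=0$ can be used; one must also confirm that each remainder is uniformly bounded, which follows from compactness. Stopping the expansion one order earlier would produce only $O(h)$. Nothing else is expected to be an obstacle.
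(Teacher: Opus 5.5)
Your proposal is correct and follows the paper's proof essentially step for step. Both expand each term of Lemma~\ref{lem:central-explicit} to order $k+1$ with an $O(h^{k+2})$ remainder, cancel the polynomial part via Lemma~\ref{lem:moment-cancel} (including $S_{k+1}(k)=0$), and divide by $(2h)^k$. The only cosmetic difference is that you keep the factor $|k-2i|^{k+2}$ in the remainder bound, whereas the paper absorbs it into the constant $C$ and bounds the remainder by $2^k C|h|^{k+2}$.
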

\begin{proof}
By Lemma~\ref{lem:central-explicit},
\[
\delta_h^{k}[f](x)=\sum_{i=0}^{k} (-1)^i \binom{k}{i}\, f\!\big(x+(k-2i)h\big).
\]
Expand each term by Taylor's theorem with remainder:
\[
f(x+(k-2i)h)=\sum_{m=0}^{k+1}\frac{f^{(m)}(x)}{m!}\,(k-2i)^m h^m \;+\; R_i,
\qquad |R_i|\le C\,|h|^{k+2}
\]
for some constant $C$ depending on $\sup_{|t-x|\le kh}|f^{(k+2)}(t)|$.
Substituting into the sum yields
\[
\delta_h^{k}[f](x)
=
\sum_{m=0}^{k+1}\frac{f^{(m)}(x)}{m!}\,h^m\, S_m(k)\;+\;\sum_{i=0}^{k}(-1)^i\binom{k}{i}R_i.
\]
By Lemma~\ref{lem:moment-cancel}, all terms with $m<k$ vanish, the $m=k$ term equals
\[
\frac{f^{(k)}(x)}{k!}h^k S_k(k) = \frac{f^{(k)}(x)}{k!}h^k (2^k k!) = (2h)^k f^{(k)}(x),
\]
and the $m=k+1$ term vanishes as well. The remainder satisfies
\[
\left|\sum_{i=0}^{k}(-1)^i\binom{k}{i}R_i\right|
\le
\left(\sum_{i=0}^{k}\binom{k}{i}\right)\,C|h|^{k+2}
=
2^k\,C|h|^{k+2}.
\]
Dividing by $(2h)^k$ proves the claim.
\end{proof}

\subsection{Extension to partial and mixed derivatives}

Let $f:\mathbb{R}^d\to\mathbb{R}$ and let $e_j$ denote the $j$-th standard basis vector. Define
\[
(\delta_{h,j} f)(x):= f(x+h e_j)-f(x-h e_j),
\qquad
\delta_{h,j}^{k} := \underbrace{\delta_{h,j}\circ\cdots\circ\delta_{h,j}}_{k\ \text{times}}.
\]
Applying Lemma~\ref{lem:central-O2} with all other coordinates held fixed yields:
\begin{lemma}\label{lem:partial-O2}
If $f$ is $\mathcal{C}^{k+2}$, then for each $j$,
\[
\frac{1}{(2h)^k}\,\delta_{h,j}^{k}[f](x)
=
\partial_{j}^{k} f(x) + O(h^2).
\]
More generally, for a multi-index $\alpha=(\alpha_1,\ldots,\alpha_d)$ with $|\alpha|=\sum_j \alpha_j$,
\[
\Big(\prod_{j=1}^{d} (2h)^{-\alpha_j}\Big)
\left(\prod_{j=1}^{d}\delta_{h,j}^{\alpha_j}\right)[f](x)
=
\partial^\alpha f(x) + O(h^2),
\]
provided $f\in \mathcal{C}^{|\alpha|+2}$ and the stencil points remain in the domain.
\end{lemma}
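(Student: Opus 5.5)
The first claim is Lemma~\ref{lem:central-O2} applied to a one-variable slice. Fix all coordinates except the $j$-th and set $\phi(t)=f(x+te_j)$. Then $\phi$ is $\mathcal{C}^{k+2}$ near $0$, and $\delta_{h,j}^k[f](x)=\delta_h^k[\phi](0)$, because the shifts only move the $j$-th coordinate. Lemma~\ref{lem:central-O2} then gives $(2h)^{-k}\delta_h^k[\phi](0)=\phi^{(k)}(0)+O(h^2)=\partial_j^k f(x)+O(h^2)$. The constant depends on $\sup|\partial_j^{k+2}f|$ over the stencil segment.

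For the mixed case I would work directly in $d$ variables rather than iterating the one-dimensional result. Iterating is awkward because the error terms would need to be differenced again, and their smoothness is not controlled. The operators $\delta_{h,j}$ commute, so by Lemma~\ref{lem:central-explicit} applied in each coordinate,
\[
\Big(\prod_j\delta_{h,j}^{\alpha_j}\Big)[f](x)=\sum_{i_1,\ldots,i_d}\prod_j(-1)^{i_j}\binom{\alpha_j}{i_j}\, f\Big(x+h\sum_j(\alpha_j-2i_j)e_j\Big),
\]
with $0\le i_j\le\alpha_j$. I would Taylor expand $f$ at $x$ to total order $|\alpha|+1$, with a remainder bounded by $C|h|^{|\alpha|+2}$. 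This uses $f\in\mathcal{C}^{|\alpha|+2}$ on the convex hull of the stencil, which lies in the domain by assumption. Writing $\beta$ for a multi-index with $|\beta|\le|\alpha|+1$, the monomial $\prod_j(\alpha_j-2i_j)^{\beta_j}h^{|\beta|}$ sums to $h^{|\beta|}\prod_j S_{\beta_j}(\alpha_j)$, since the weights factor across coordinates.

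By Lemma~\ref{lem:moment-cancel}, this product vanishes unless $\beta_j\ge\alpha_j$ for every $j$. For $|\beta|\le|\alpha|+1$ that leaves two cases. The first is $\beta=\alpha$, which contributes $\frac{\partial^\alpha f(x)}{\alpha!}h^{|\alpha|}\prod_j 2^{\alpha_j}\alpha_j!=(2h)^{|\alpha|}\partial^\alpha f(x)$. The second is $\beta=\alpha+e_j$, which contributes a factor $S_{\alpha_j+1}(\alpha_j)=0$. Coordinates with $\alpha_j=0$ need care: there $S_0(0)=1$ but $S_1(0)=0$, and this is consistent with the general rule. The remainders sum to at most $\prod_j 2^{\alpha_j}C|h|^{|\alpha|+2}$. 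Dividing by $(2h)^{|\alpha|}$ gives $\partial^\alpha f(x)+O(h^2)$.

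The only delicate point is the bookkeeping in the multivariate cancellation, namely checking that every $\beta$ of total degree $|\alpha|+1$ other than the pure-shift cases is killed. This holds because such a $\beta$ must have $\beta_j<\alpha_j$ for some $j$, and then $S_{\beta_j}(\alpha_j)=0$.
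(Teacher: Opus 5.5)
Your proof is correct. For the pure partial $\partial_j^k$ it is exactly the paper's argument: restrict to the line $t\mapsto f(x+te_j)$ and apply Lemma~\ref{lem:central-O2}.

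For the mixed case your route differs from the paper's. The paper's only justification is the phrase ``applying Lemma~\ref{lem:central-O2} with all other coordinates held fixed,'' which implicitly means iterating over coordinates. Made rigorous, that iteration must deal with the $O(h^2)$ error from one direction being differenced and divided by $(2h)^{\alpha_i}$ in the other directions. A naive bound then loses powers of $h$. To avoid this, one must add two facts the paper never spells out: $(2h)^{-k}\delta_{h,i}^k$ is an average of $\partial_i^k$ against a symmetric B-spline kernel, and $\partial_i$ commutes with $\delta_{h,j}$. Together these turn the differenced error into an average of the one-dimensional error for $\partial_i^{\alpha_i}f$.

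You instead expand the tensor-product stencil once and Taylor expand jointly to order $|\alpha|+1$. Because the weights factor, every moment equals $\prod_j S_{\beta_j}(\alpha_j)$, so the multivariate cancellation reduces coordinate-wise to Lemma~\ref{lem:moment-cancel}. Your handling of the degenerate coordinates ($S_0(0)=1$, $S_m(0)=0$ for $m\ge 1$) and of the terms with $|\beta|=|\alpha|+1$ is right.

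Your route gives a self-contained proof under exactly $f\in\mathcal{C}^{|\alpha|+2}$, with an explicit constant $2^{|\alpha|}C$, and never has to ask how regular the intermediate error terms are. The iterative route reuses the one-dimensional lemma verbatim, but it is only complete once the averaging-and-commutation step is added.

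One small precision: the Taylor remainder needs the segments from $x$ to the stencil points to lie in the domain, not just the points themselves. This holds automatically on the box-shaped parameter domains used here, or for small $h$ at an interior $x$.
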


In practice, for non-cyclic coordinate directions we apply these central stencils only on interior grid
points and crop a margin of width proportional to the stencil radius.

\subsection{Implication for scalar curvature}

For an embedded chart $u:U\subset\mathbb{R}^d\to \mathbb{R}^D$, the scalar curvature computed via the Riemann tensor
depends on derivatives of $u$ up to third order (through $g_{ij}=\langle \partial_i u,\partial_j u\rangle$,
$\Gamma^i_{jk}$, and $\partial_\ell \Gamma^i_{jk}$). If $u\in \mathcal{C}^{5}$ on $U$ and the metric is uniformly
non-degenerate, then:
\begin{itemize}
    \item each partial derivative of $u$ up to order $3$ is approximated with error $O(h^2)$ by Lemma~\ref{lem:partial-O2};
    \item the induced metric $g$, its inverse $g^{-1}$, Christoffel symbols, Riemann tensor, Ricci tensor, and scalar curvature
    are obtained from these derivatives via sums, products, and multiplication by $g^{-1}$, hence they inherit $O(h^2)$ error
    under the stated non-degeneracy condition.
\end{itemize}

Consequently, the pointwise scalar curvature estimator based on these finite differences converges at rate $O(h^2)$.
For a grid with typical spacing $h\asymp n^{-1/d}$ (e.g.\ $n$ points in $d$ dimensions with comparable resolution),
this translates to the sample-complexity scaling
\[
\text{curvature error} \;=\; O(h^2)\;=\;O\!\left(n^{-2/d}\right),
\]
up to constants depending on derivatives of $u$ and conditioning of the metric.

\section{Empirical comparison on scalar curvature estimation}\label{appendix:experimental-comparison}

As mentioned in the paper, we empirically compare our method with \cite{sritharan2021computing}, which is one of the few scalar-curvature estimators with publicly available experiments and code. We evaluate on the following manifolds from their paper.

\subsubsection*{Hyperspheres $S^2, S^3, S^4$}

We consider the unit hyperspheres $S^d\subset\mathbb{R}^{d+1}$ with parameters $\phi_1,\ldots,\phi_{d-1}\in[0.2\pi,0.8\pi]$ and $\phi_d\in[0,2\pi)$, using the standard hyperspherical parametrization
\begin{align*}
    x_1 &= \cos\phi_1, \\
    x_i &= \Big(\prod_{j=1}^{i-1}\sin\phi_j\Big)\cos\phi_i,\qquad i=2,\ldots,d,\\
    x_{d+1} &= \Big(\prod_{j=1}^{d-1}\sin\phi_j\Big)\sin\phi_d .
\end{align*}
The scalar curvature of $S^d$ is constant: $R(\phi_1,\ldots, \phi_d) = d(d-1)$.

\subsubsection*{Hyperboloid $H^2_2$}

We consider a two-dimensional hyperboloid with parameters $\phi_1\in[\sinh^{-1}(-1),\sinh^{-1}(1)]$ and $\phi_2\in[0,2\pi)$ and parametrization
\begin{align*}
    x_1 &= a_1\sinh\phi_1, \\
    x_2 &= a_2\cosh\phi_1\sin\phi_2, \\
    x_3 &= a_3\cosh\phi_1\cos\phi_2 .
\end{align*}
The semi-axes are $(a_1,a_2,a_3)=(1,2,2)$. The range of $\phi_1$ is chosen so that the sampled patch has height $2$ along the $x_1$-axis. The scalar curvature is
\[
R(\phi_1,\phi_2)=-\frac{2}{\bigl(5\sinh^2\phi_1+1\bigr)^2}.
\]

\subsubsection*{Torus $T^2$}

We consider a two-dimensional torus with parameters $\phi_1,\phi_2\in[0,2\pi)$ and parametrization
\begin{align*}
    x_1 &= (R + r \cos\phi_1)\cos\phi_2, \\
    x_2 &= (R + r \cos\phi_1)\sin\phi_2, \\
    x_3 &= r \sin\phi_1 .
\end{align*}
The radii are $(r,R)=(0.5,2.5)$ and the scalar curvature is
\[
R(\phi_1,\phi_2)=\frac{2\cos\phi_1}{r(R+r\cos\phi_1)}=\frac{8\cos\phi_1}{5+\cos\phi_1}.
\]

\subsubsection*{Sampling}

For each target sample size $n\in\{2000,4000,\ldots,20000\}$ and intrinsic dimension $d$, we set $m=\lfloor n^{1/d}\rfloor$ and sample $m$ equidistant values for each parameter over its range to form a Cartesian grid. This yields $m^d=\lfloor n^{1/d}\rfloor^d$ points per manifold, hence the realized sample sizes do not generally match across different dimensions. For the hyperspheres we restrict $\phi_1,\ldots,\phi_{d-1}$ to a subset of $[0,\pi)$ to simplify the topology for the finite difference computations. Since both $S^d$ and the computations of both methods are $SO(d+1)$-invariant, applying them to two patches which are rotations of the current patch and cover $S^d$ would yield the same results.

\subsubsection*{Methods configuration}

Our finite-difference estimator is run directly on the sampled grid, given the parameter ranges. To run the method of \cite{sritharan2021computing}, we ported their MATLAB code from \url{https://gitlab.com/hormozlab/ManifoldCurvature} to Python and verified correctness by reproducing their reported results on uniformly sampled $S^2,S^3,S^5,S^7,H^2_2,$ and $T^2$ using the hyperparameters from their Appendix~D.3. We then applied both methods to the grid-sampled datasets above. Because the method of \cite{sritharan2021computing} is sensitive to its hyperparameters, we evaluated $50$ candidate values (per manifold) over ranges where the method converged and report the best mean squared error. These baseline scores are therefore mildly optimistic.

\setlength{\abovecaptionskip}{6pt}
\setlength{\belowcaptionskip}{6pt}

\section{Grid-Resolution Ablation for Image-Based Measures}
\label{appendix:measure_convergence_on_image_datasets}

The analytic manifolds provide ground truth for volume and curvature, but the image-based dSprites and COIL-20 manifolds do not. We therefore additionally test whether the estimated measures are stable when the grid resolution of the transformation parameters is increased. This ablation is not a substitute for ground truth, but it checks whether the quantities used in the paper are in a regime where further grid refinement changes them only moderately.

\paragraph{Protocol.}
For each class we treat the corresponding image grid as a separate parametrized manifold. For dSprites, the parameters are horizontal translation, vertical translation, scale, and rotation. For COIL-20, the parameters are scale, in-plane rotation, and turntable view. We vary a target grid density $n$ around the value used in the main experiments; the final datasets in the paper use $n=16$ values per transformation dimension, with additional buffer samples on non-cyclic dimensions where needed for finite differences. For each density and each class, we compute the pointwise volume element, scalar curvature, and local reach. The volume element is computed from the finite-difference metric $g_{ij}=\langle \partial_i F,\partial_j F\rangle$ as $\sqrt{\det g}$. Scalar curvature is computed from the same metric using central finite differences through the Christoffel symbols and Riemann tensor. Reach is estimated pointwise from the tangent-space formula
\[
    \rho(x) \approx \inf_{y\neq x}\frac{\|F(y)-F(x)\|^2}{2\, d(F(y)-F(x),T_xM)},
\]
where tangent spaces are also estimated by finite differences.

The grids at two consecutive densities do not generally share the same parameter locations. Moreover, these are image manifolds at fixed spatial resolution, so increasing the parameter density can expose rasterization and interpolation artifacts rather than only reducing finite-difference error. For this reason we first interpolate the pointwise fields to a common quadrature grid and compare consecutive densities with a negative Sobolev norm instead of a pointwise $L^2$ norm. For a field difference $u=f_n-f_{n+1}$ on the common grid, we use
\[
    \|u\|_{H^{-1}}^2
    = \sum_{\xi} (1+\|\xi\|_2^2)^{-1} |\widehat{u}(\xi)|^2,
\]
with exponent $p=2$. This weak norm downweights high-frequency differences and is therefore less sensitive to small spatial shifts and rasterization noise. We report both the $H^{-1}$ norms of the fields and the relative consecutive errors
\[
    \frac{\|f_{n+1}-f_n\|_{H^{-1}}}{\|f_n\|_{H^{-1}}}.
\]

\begin{figure}[h!]
  \centering
    \includegraphics[width=\textwidth, trim=0cm 0cm 0cm 0cm, clip]{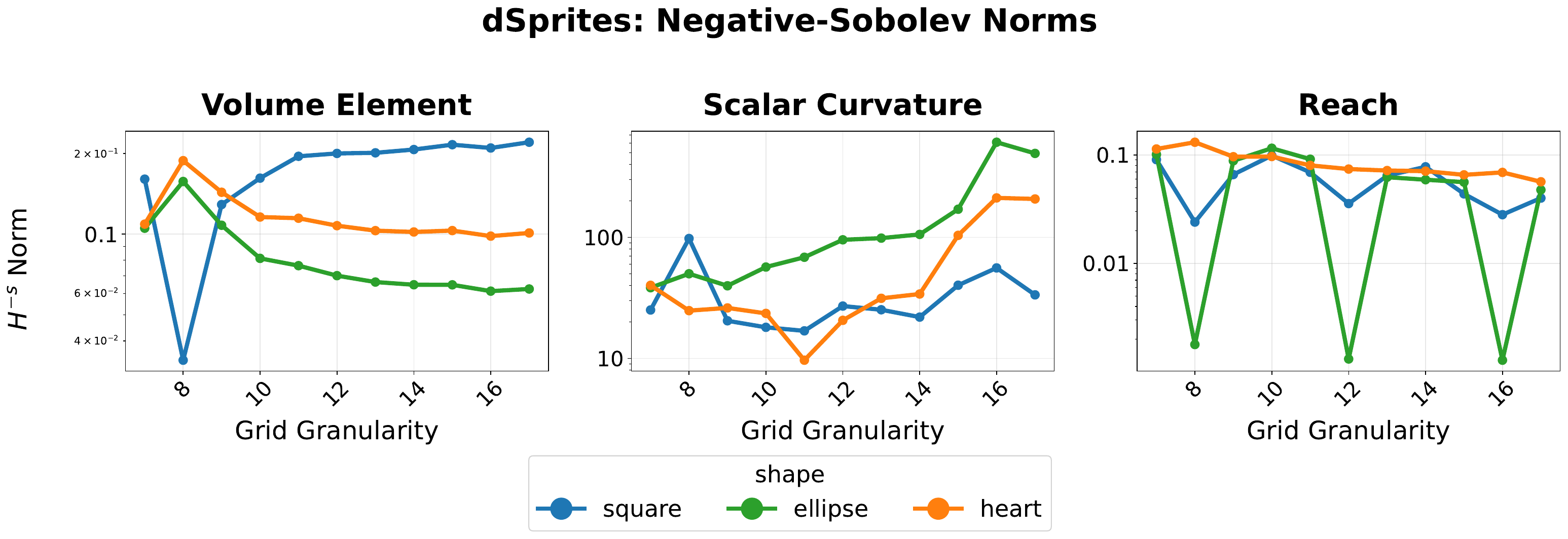}
    \caption{Negative-Sobolev norms of the pointwise volume element, scalar curvature, and reach for the dSprites grid-resolution ablation. Each curve corresponds to one class manifold.}
  \label{fig:dsprites_negative_sobolev_norms}
\end{figure}

\begin{figure}[h!]
  \centering
    \includegraphics[width=\textwidth, trim=0cm 0cm 0cm 0cm, clip]{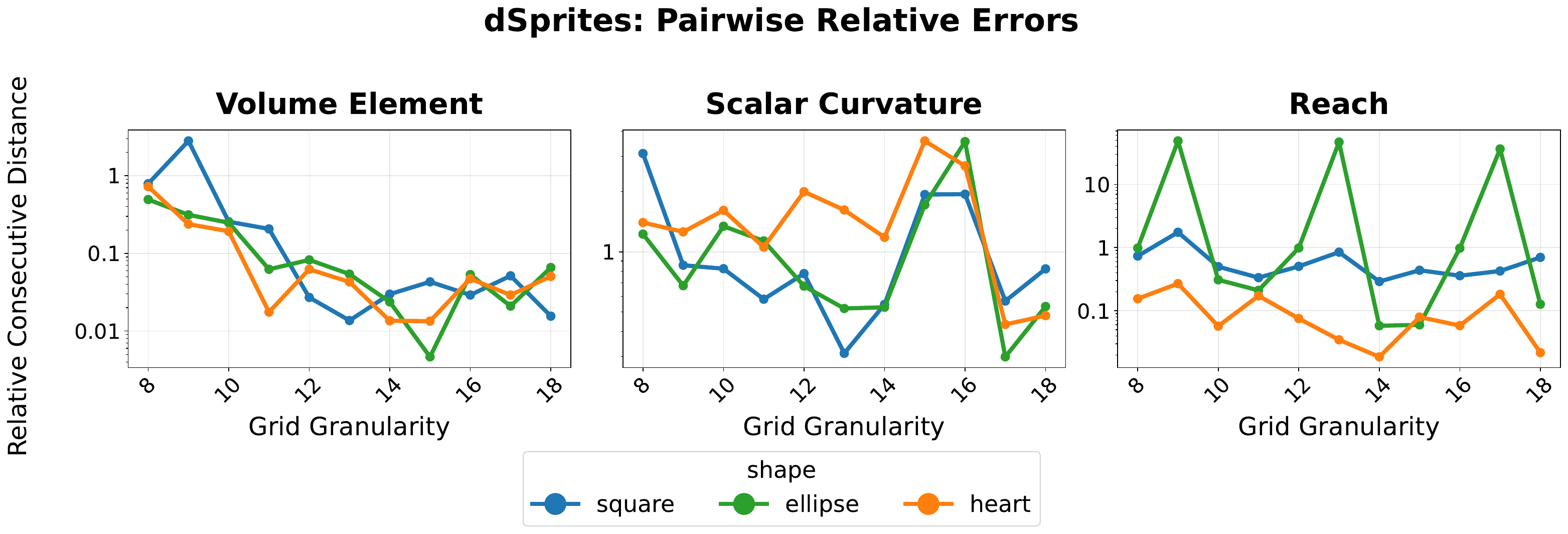}
    \caption{Relative consecutive negative-Sobolev errors for dSprites. The value at each plotted density compares the fields estimated at that density with the previous plotted density.}
  \label{fig:dsprites_negative_sobolev_relative_distances}
\end{figure}

\begin{figure}[h!]
  \centering
    \includegraphics[width=\textwidth, trim=0cm 0cm 0cm 0cm, clip]{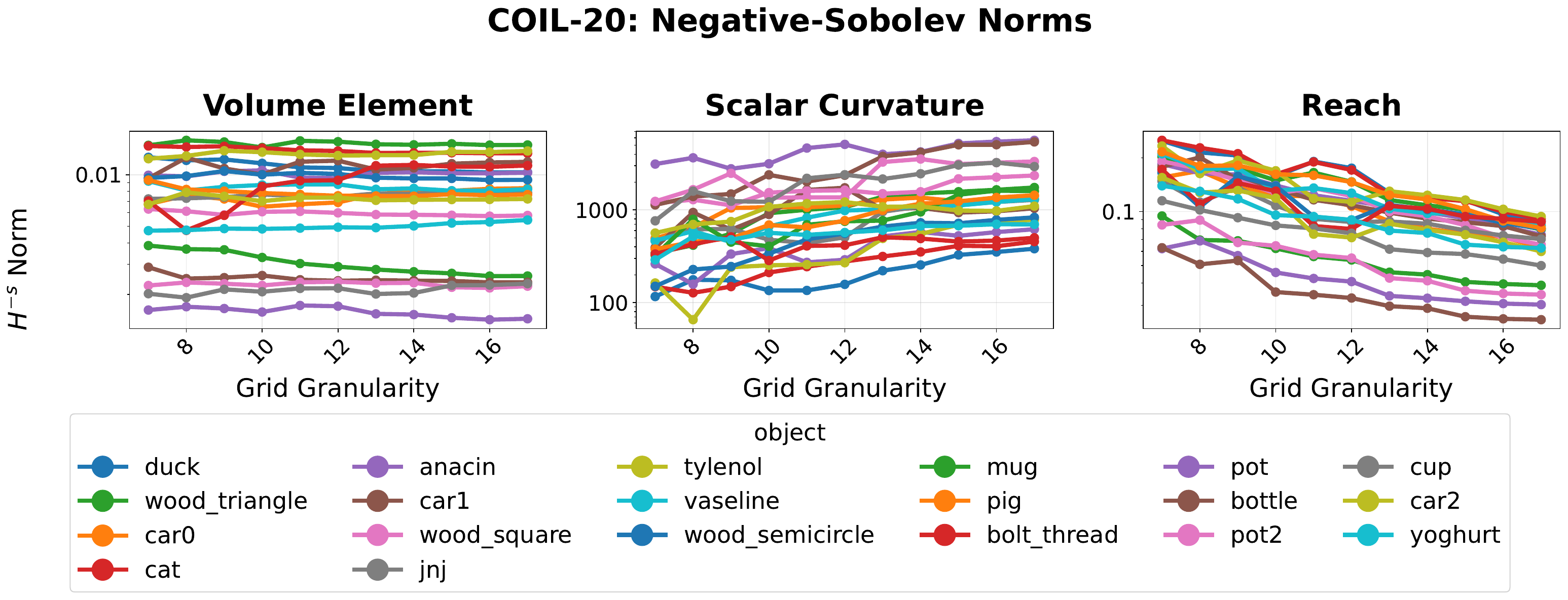}
    \caption{Negative-Sobolev norms of the pointwise volume element, scalar curvature, and reach for the COIL-20 grid-resolution ablation. Each curve corresponds to one object manifold.}
  \label{fig:coil20_negative_sobolev_norms}
\end{figure}

\begin{figure}[h!]
  \centering
    \includegraphics[width=\textwidth, trim=0cm 0cm 0cm 0cm, clip]{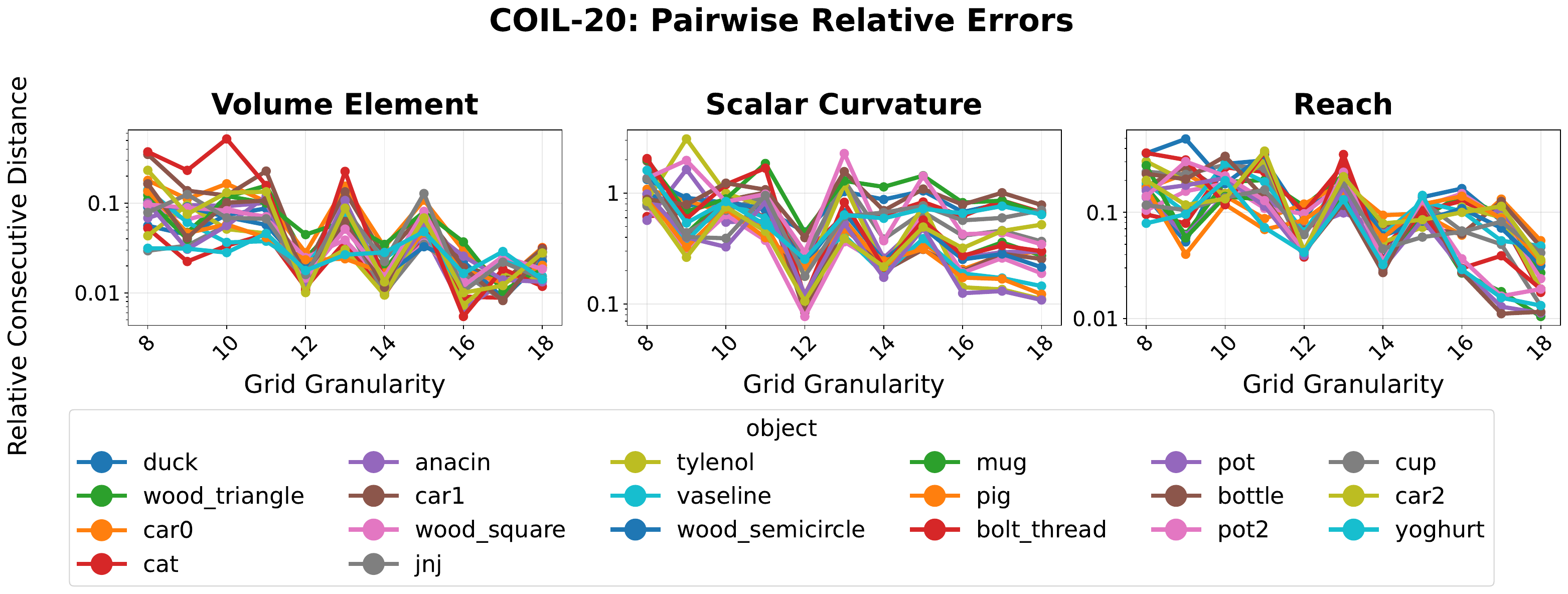}
    \caption{Relative consecutive negative-Sobolev errors for COIL-20. The value at each plotted density compares the fields estimated at that density with the previous plotted density.}
  \label{fig:coil20_negative_sobolev_relative_distances}
\end{figure}

\paragraph{Observations.}
Figures~\ref{fig:dsprites_negative_sobolev_norms}-\ref{fig:coil20_negative_sobolev_relative_distances} show that the volume element is the most stable of the three measures. 
Curvature has larger class-dependent variation and stronger high-frequency changes across grid densities, which is expected for rasterized image manifolds: at fixed image resolution, finer parameter grids increasingly probe interpolation and subpixel rasterization artifacts, and curvature amplifies such effects because it depends on further finite differences of the metric. Reach is also less stable, especially for a few dSprites classes where isolated spikes appear in the relative errors. This is consistent both with rasterization-induced local metric fluctuations and with the fact that the reach estimate is based on a minimum over pairs of points, making it sensitive to a small number of unstable witnesses. We study this rasterization effect more directly in Appendix~\ref{appendix:ellipse_rasterization_metric_smoothing}.

Overall, the ablation supports using the balanced image datasets at density $n=16$ as a practical compromise between computational cost and estimator stability. The estimates are not fully converged in a strict asymptotic sense, and the plots should not be interpreted as ground-truth validation. They show, however, that the main trends are not caused by a single very coarse discretization choice.

Finally, increasing the parameter-grid density while keeping the image resolution fixed is not always beneficial. At high parameter density, neighboring images may differ mainly through subpixel rasterization and interpolation effects.
Finite differences can amplify these small image-level changes, especially for metric derivatives and scalar curvature. We study this finite-image-resolution effect directly in Appendix~\ref{appendix:ellipse_rasterization_metric_smoothing}.
Users generating new versions of these datasets should therefore compare resolutions with a weak metric such as the negative Sobolev norm, ideally increase the image resolution together with the parameter density, and consider smoothing the metric tensor. Metric-tensor smoothing is implemented in the released codebase and can reduce rasterization-induced high-frequency noise.

\section{Datasets}
\label{appendix:datasets}

\begin{figure}[h!]
  \centering
    \includegraphics[width=0.8\textwidth, trim=1.2cm 0.9cm 0cm 0cm, clip]{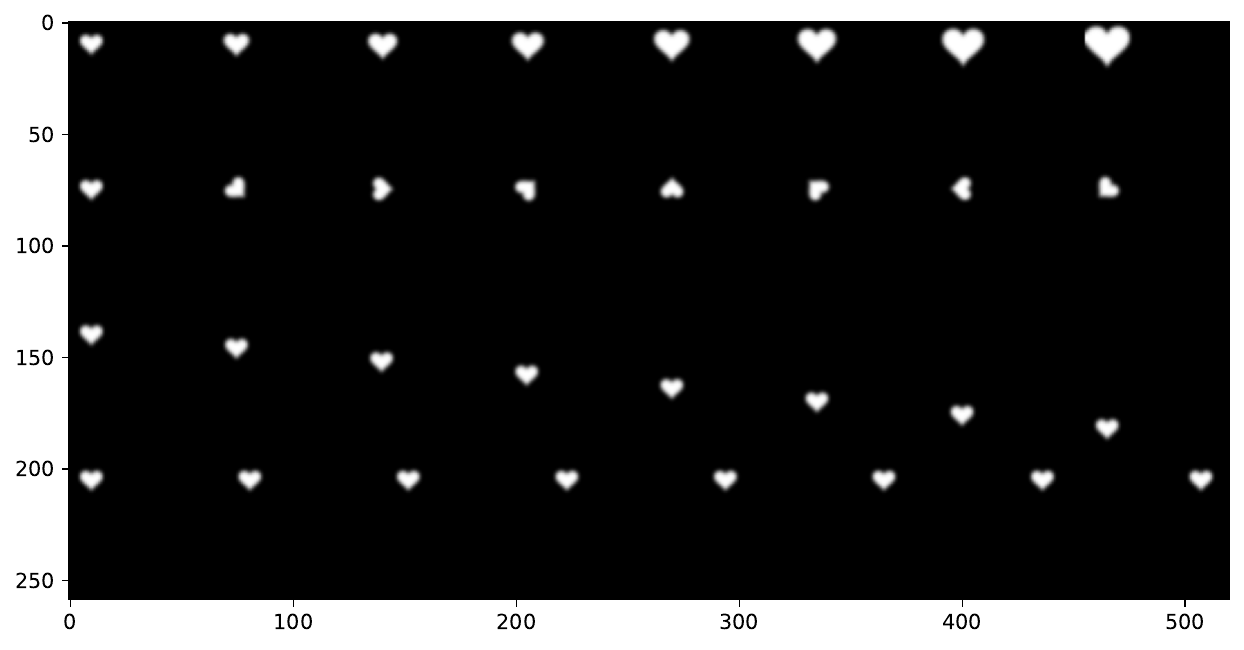}
    \caption{Example images of the heart class of dSprites.}
  \label{fig:dsprites_example}
\end{figure}

\subsection{dSprites}

For the original dataset see \cite{dsprites17}. Example images are shown in Fig.~\ref{fig:dsprites_example} and a 3D PCA projection in Fig.~\ref{fig:dsprites_proj}. In the PCA projection, class manifolds are strongly intertwined and partially envelop each other. The original dSprites rasterization exhibits visible aliasing artifacts, especially after rotation and rescaling on a $64\times 64$ grid. To mitigate this, we generate each sprite at higher resolution and only downsample to $64\times 64$ after applying the deformation transforms.

\paragraph{Sprite construction (high-resolution).}
We start from a binary mask on an empty background (zeros) on a canvas that is $s\times s$ with
$s = 4\cdot 64 = 256$ pixels. The sprite itself is centered and has initial size $4\cdot 20 = 80$ pixels.
We generate three shapes: a filled square, a filled ellipse (axes ratio $2{:}1$), and a filled heart
(from a parametric contour that is rasterized and filled). Intensities are stored as \texttt{float32}.

\paragraph{Transform pipeline.}
For each shape we apply the following transforms in this order:
\begin{enumerate}
    \item \textbf{Rotation (high-resolution):} rotate the $256\times 256$ sprite by angles
    $\theta_j = 2\pi j/n_{\text{angles}}$, $j=0,\dots,n_{\text{angles}}-1$ (i.e.\ uniformly spaced in $[0,2\pi)$).
    \item \textbf{Rescaling + downsampling:} for each scale factor $r$ (uniformly spaced in $[0.5,1]$),
    we resample the rotated sprite to the target canvas size $64\times 64$ using \emph{area interpolation}.
    Conceptually, this step (i) sets the effective sprite size to $\lfloor 64\,r\rfloor$ pixels and
    (ii) undoes the initial $4\times$ upscaling by downsampling to the final resolution.
    \item \textbf{Translation (final resolution):} translate the $64\times 64$ image by integer offsets
    $(\Delta x,\Delta y)$ on the pixel grid, with $\Delta x,\Delta y \in \{-m, -m+\Delta, \ldots, m-\Delta\}$,
    where $m$ is the maximum translation and $\Delta$ is the translation stride. The background remains empty.
\end{enumerate}

\paragraph{Intensity normalization and smoothing.}
After all transforms, we normalize intensities globally to $[-1,1]$ (via min--max normalization over the generated
grid). Optionally, we apply a Gaussian blur to the spatial axes (default $\sigma=0.8$) to further suppress
staircasing from discretization. Optionally, we also provide a standardized version where the dataset is centered
and divided by its standard deviation.

\paragraph{Output tensor.}
The dataset is stored as a dense grid with shape
\[
(n_{\text{shapes}},\, n_{\text{scales}},\, n_{\text{angles}},\, n_{\Delta y},\, n_{\Delta x},\, 64,\, 64),
\]
containing all combinations of factors.

\begin{figure}[h!]
  \centering
    \includegraphics[width=0.6\textwidth, trim=0cm 0cm 0cm 0cm, clip]{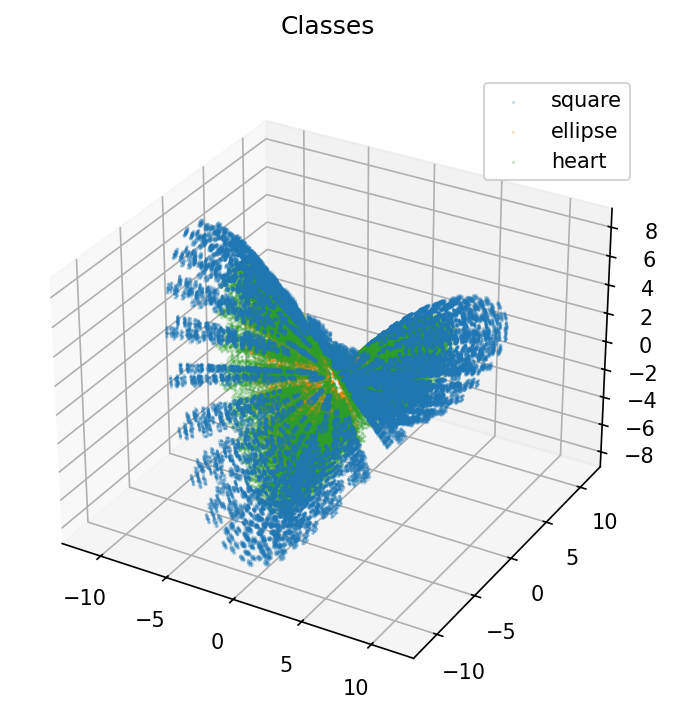}
    \caption{PCA projection of the dSprites dataset.}
  \label{fig:dsprites_proj}
\end{figure}

\paragraph{Parameters (balanced variant).}
In our balanced version we use 3 shapes and 16 values per continuous factor (scale, orientation, and 2D position),
yielding $3\times 16^4 = 196{,}608$ images. For geometric estimation that requires local neighborhoods, we optionally
add a bidirectional buffer of $b$ grid steps on the non-cyclic dimensions (scale and translations), i.e. we generate
$16+2b$ values along those axes and treat only the central 16 as the evaluation domain.

\begin{itemize}
    \item Image shape: $64\times 64$
    \item Shape: 3 values (square, ellipse, heart)
    \item Scale: 16 values linearly spaced in $[0.5, 1]$ (optionally $16+2b$ with buffer)
    \item Orientation: 16 values in $[0, 2\pi)$
    \item Position X: 16 values on a uniform translation grid (optionally $16+2b$ with buffer)
    \item Position Y: 16 values on a uniform translation grid (optionally $16+2b$ with buffer)
\end{itemize}

\subsection{COIL-20}

For the original dataset see \cite{nene1996columbia}. Example images are shown in Fig.~\ref{fig:coil20_example} and a 3D PCA projection in Fig.~\ref{fig:coil20_proj}. As in dSprites, the class manifolds are highly intertwined and partially envelop each other in low-dimensional projections.

\paragraph{Base data (object turntable views).}
We start from the preprocessed COIL-20 grayscale images (20 objects, 72 views per object, original resolution
$128\times 128$). Each view corresponds to a rotation of the object around the vertical axis on a turntable.
To reduce redundancy, we optionally subsample the view index by a stride of 4, resulting in
$72/4=18$ horizontal orientations per object. Pixel values are converted from $[0,256)$ to $[0,1]$ and stored
as \texttt{float32}.

\paragraph{Padding for deformation transforms.}
To avoid cropping during in-plane rotations, each $128\times 128$ image is embedded into a larger square canvas
of size $s\times s$ with $s=182\approx 128\sqrt{2}$, by placing it centered on a zero background. This ensures that
a subsequent $45^\circ$ rotation fits within the padded frame up to discretization.

\paragraph{Transform pipeline.}
For each object and each subsampled turntable view, we apply the following transforms:
\begin{enumerate}
    \item \textbf{In-plane rotation:} rotate the padded image by angles
    $\theta_j = 2\pi j/n_{\text{angles}}$, $j=0,\dots,n_{\text{angles}}-1$ (uniformly spaced in $[0,2\pi)$).
    This adds a second (independent) rotational factor beyond the original turntable viewpoint.
    \item \textbf{Rescaling + cropping to target size:} for each scale factor $r$ (uniformly spaced in $[0.5,1]$),
    we resample to the final resolution $64\times 64$ using the same resizing routine as for dSprites.
    Operationally, we first set the effective crop size to $\lfloor 64\,r\rfloor$ and then map the result to the
    fixed $64\times 64$ canvas, yielding a controlled scale factor while keeping the output size constant.
    To support finite-difference estimators that require neighboring grid points along non-cyclic axes, we optionally
    generate $n_{\text{sizes}}+2b$ scales and use only the central $n_{\text{sizes}}$ for evaluation (buffer $b$).
\end{enumerate}

\paragraph{Intensity normalization.}
After all transforms, intensities are mapped from $[0,1]$ to $[-1,1]$ by an affine rescaling. Optionally, we also
provide a standardized version (global mean subtraction and division by standard deviation).

\paragraph{Output tensor.}
The resulting dense grid has shape
\[
(20,\, n_{\text{views}},\, n_{\text{angles}},\, n_{\text{sizes}},\, 64,\, 64),
\]
where $n_{\text{views}}=18$ after view subsampling, $n_{\text{angles}}$ is the number of added in-plane rotations,
and $n_{\text{sizes}}$ is the number of scale values (optionally augmented with a buffer during generation).

\begin{figure}[h!]
  \centering
    \includegraphics[width=0.8\textwidth, trim=1.2cm 0.9cm 0cm 0cm, clip]{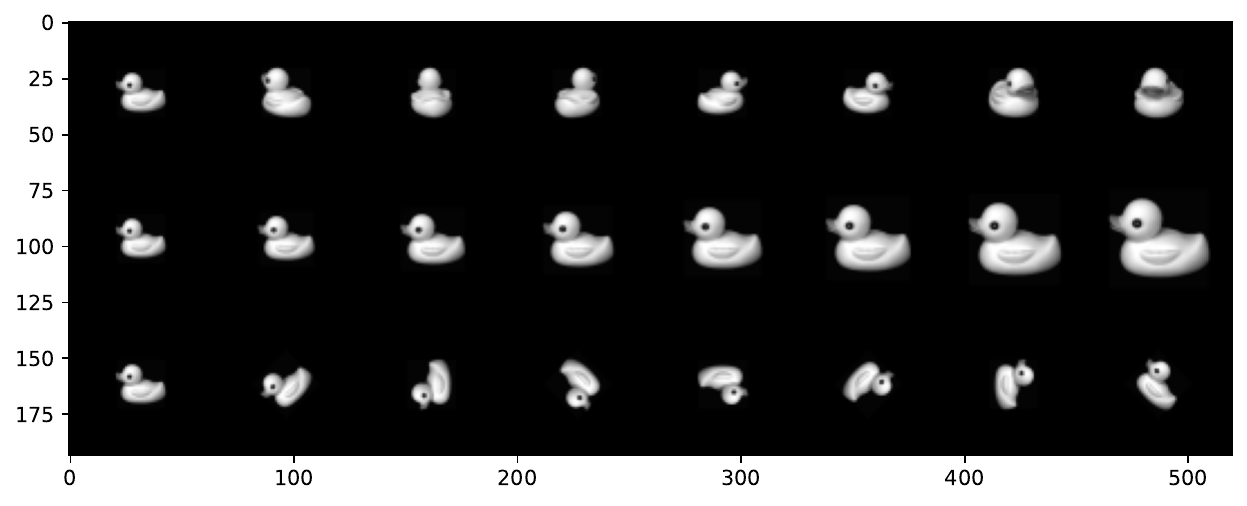}
    \caption{Example images of the duck class of COIL-20. Note the resizing and the additional in-plane rotation and scale transforms.}
  \label{fig:coil20_example}
\end{figure}

\paragraph{Parameters.}
\begin{itemize}
    \item Image shape: $64\times 64$
    \item Objects: 20 objects from COIL-20
    \item Turntable view (horizontal orientation): 18 values (subsampled from 72 by stride 4)
    \item Added in-plane rotation: 16 values in $[0,2\pi)$
    \item Scale: 16 values linearly spaced in $[0.5,1]$ (optionally $16+2b$ with buffer)
    \item Total: $92{,}160$ images ($20 \times 18 \times 16^2$), excluding any buffer-only samples
\end{itemize}

\begin{figure}[h!]
  \centering
    \includegraphics[width=0.6\textwidth, trim=1.2cm 0.9cm 0cm 0cm, clip]{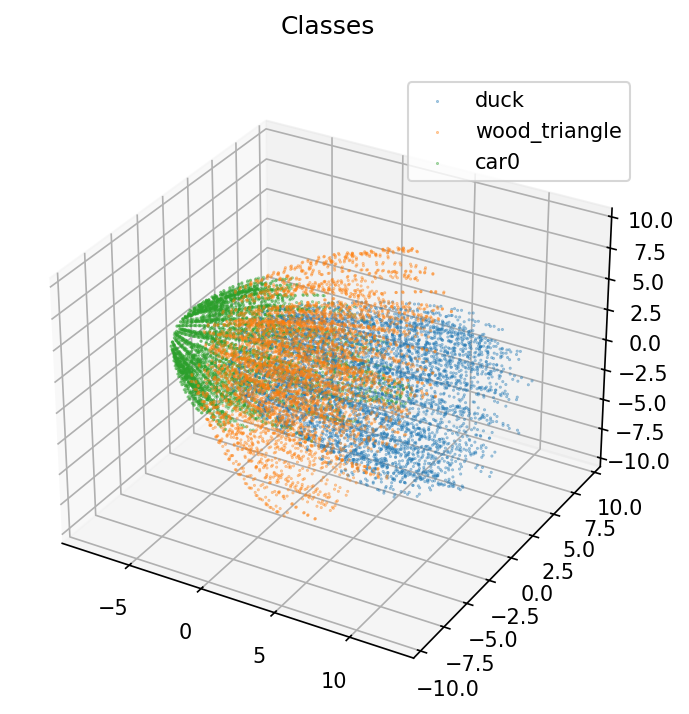}
    \caption{PCA projection of three classes of the COIL-20 dataset.}
  \label{fig:coil20_proj}
\end{figure}

\subsection{Geometric Measures on Projections}

Below we provide two 3D plots where the geometric measures are plotted on the heart class of dSprites (Fig. \ref{fig:dsprites_heart}) and on the duck class of COIL-20 (Fig. \ref{fig:coil20_duck}).
The absolute scalar curvature is the absolute value of the pointwise scalar curvature. 
In both datasets, the absolute scalar curvature exhibits smoother large-scale structure and aligns more clearly with the reach than the signed scalar curvature. 
The signed scalar curvature contains stronger local fluctuations, which should not be overinterpreted as purely geometric sign changes. 
For rasterized image manifolds, small interpolation and subpixel rasterization artifacts can be amplified by the finite-difference operations used to compute curvature. We isolate this effect in Appendix~\ref{appendix:ellipse_rasterization_metric_smoothing}.

\begin{figure}[h!]
  \centering
    \includegraphics[width=\textwidth, trim=0cm 0cm 0cm 0cm, clip]{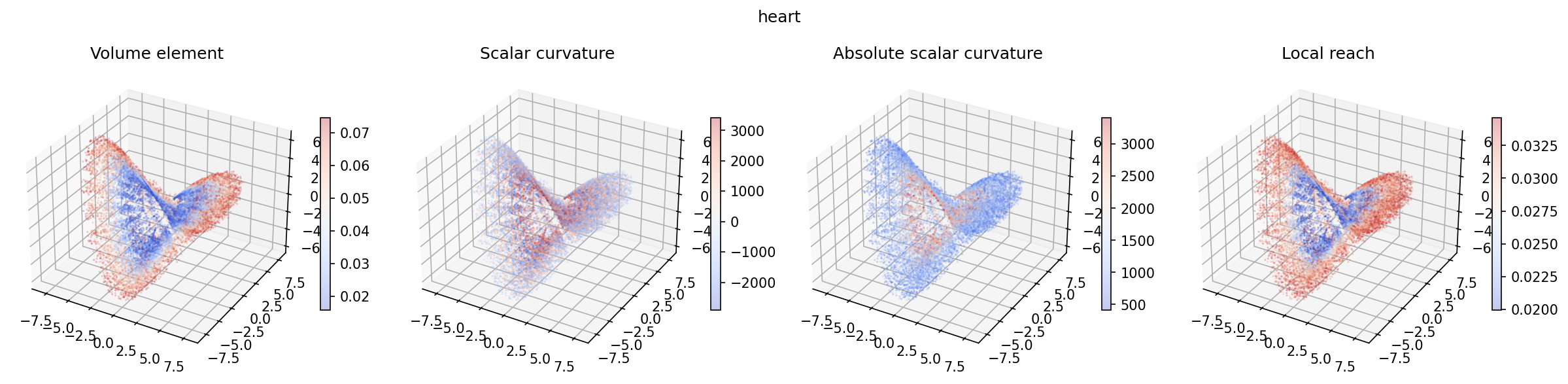}
    \caption{PCA projection of the dSprites heart class.}
  \label{fig:dsprites_heart}
\end{figure}

\begin{figure}[h!]
  \centering
    \includegraphics[width=\textwidth, trim=0cm 0cm 0cm 0cm, clip]{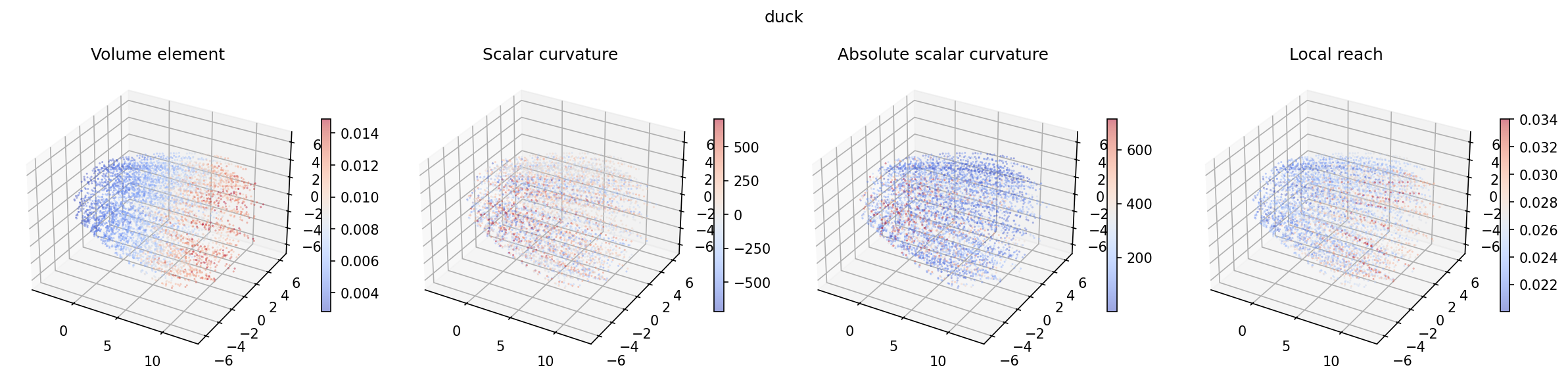}
    \caption{PCA projection of the COIL-20 duck class.}
  \label{fig:coil20_duck}
\end{figure}

\section{Rasterization Effects and Metric Smoothing on a Two-Dimensional Ellipse Manifold}
\label{appendix:ellipse_rasterization_metric_smoothing}

The grid-resolution ablation in Appendix~\ref{appendix:measure_convergence_on_image_datasets} compares the full image-based datasets across several densities. Here we isolate a simpler two-dimensional example in order to inspect the behavior of the finite-difference estimates more directly. We use only the ellipse class and restrict the transformation parameters to horizontal translation $x$ and in-plane rotation $\theta$. This gives a parametrized image manifold
\[
    F(x,\theta) \in \mathbb{R}^{64\times 64},
\]
where $x$ is non-cyclic and $\theta\in[0,2\pi)$ is cyclic. Example images from this grid are shown in Fig.~\ref{fig:ellipse_x_theta_image_grid}.

\begin{figure}[h!]
  \centering
    \includegraphics[width=0.6\textwidth, trim=0cm 0cm 0cm 14mm, clip]{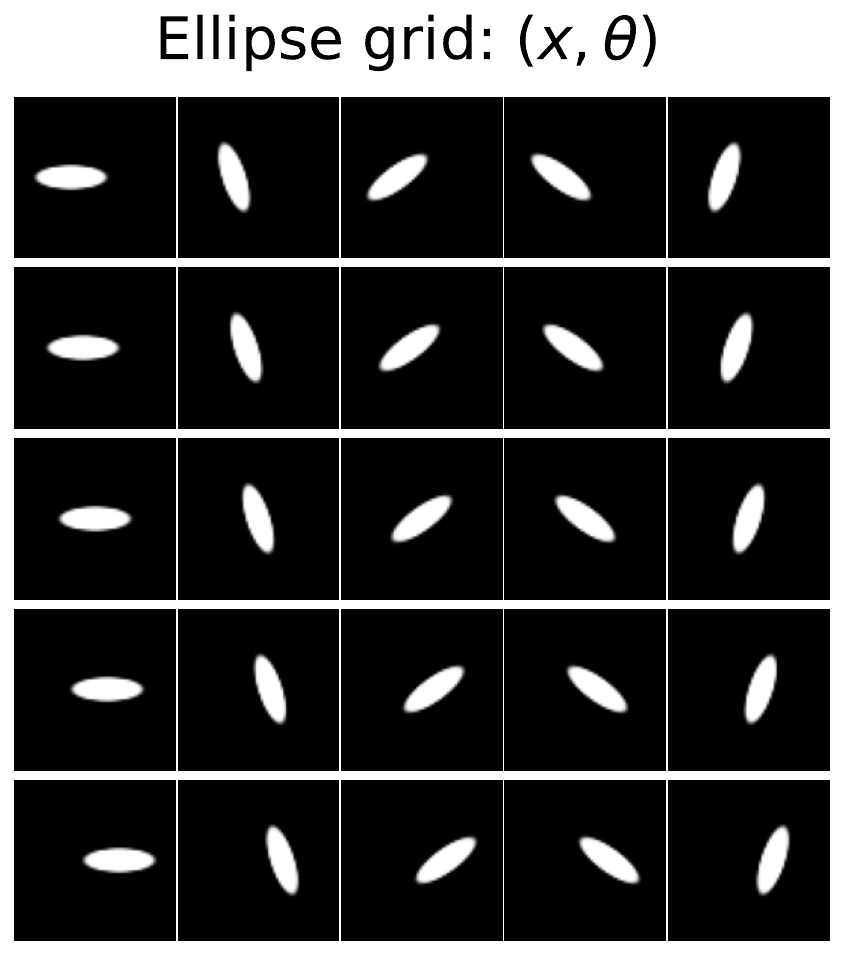}
    \caption{Example images from the two-dimensional ellipse grid generated by varying horizontal translation $x$ and rotation $\theta$.}
  \label{fig:ellipse_x_theta_image_grid}
\end{figure}

For each grid density we compute the Riemannian metric
\[
    g_{ij}(x,\theta) = \langle \partial_i F(x,\theta), \partial_j F(x,\theta) \rangle
\]
using central finite differences. We then derive the pointwise volume element $\sqrt{\det g}$ and scalar curvature from this metric. In the plots below we show the diagonal metric components $g_{00}$ and $g_{11}$, the volume element, and the scalar curvature. The top row in each figure corresponds to the direct finite-difference estimate, while the bottom row uses the metric-smoothing procedure described below.

\paragraph{Rasterization effects.}
In a smooth continuous image model, increasing the grid density would reduce the finite-difference step size and should improve derivative estimates. For rasterized images at fixed spatial resolution, however, this is not guaranteed. When the parameter grid becomes very fine while the images remain $64\times64$, neighboring samples may differ mainly through interpolation and rasterization artifacts. The finite-difference quotient can then amplify small subpixel changes, especially for curvature, which uses several derivative operations. This effect is visible in the unsmoothed rows of Figs.~\ref{fig:ellipse_x_theta_g00}--\ref{fig:ellipse_x_theta_scalar_curvature}: high-frequency oscillations appear as the transformation grid becomes finer, even though the underlying geometric transformation is simple.

\begin{figure}[h!]
  \centering
    \includegraphics[width=\textwidth, trim=0cm 0cm 0cm 0cm, clip]{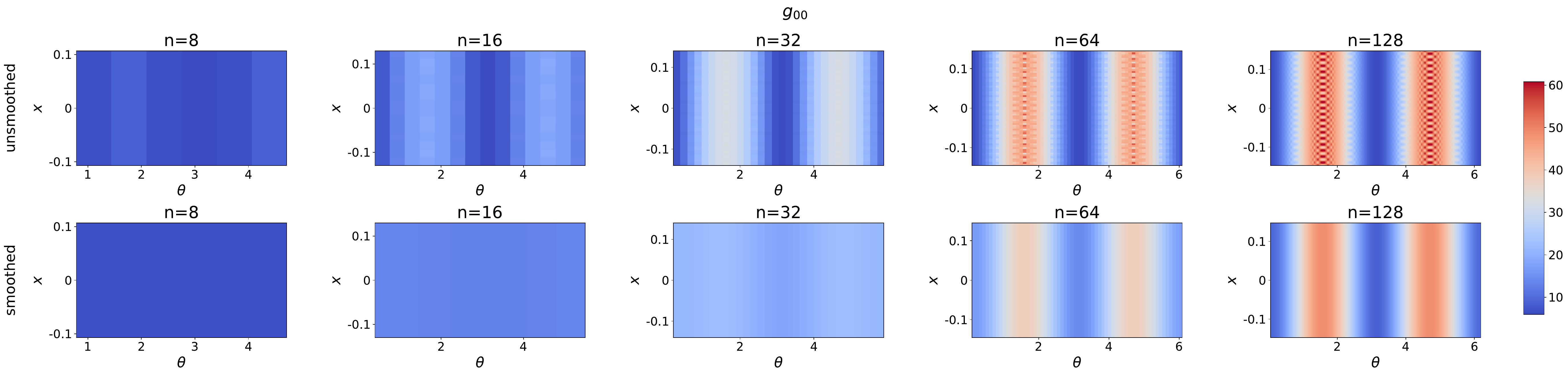}
    \caption{Estimated metric component $g_{00}$ for the ellipse grid. The top row uses the direct finite-difference metric; the bottom row smooths the metric tensor before computing derived quantities.}
  \label{fig:ellipse_x_theta_g00}
\end{figure}

\begin{figure}[h!]
  \centering
    \includegraphics[width=\textwidth, trim=0cm 0cm 0cm 0cm, clip]{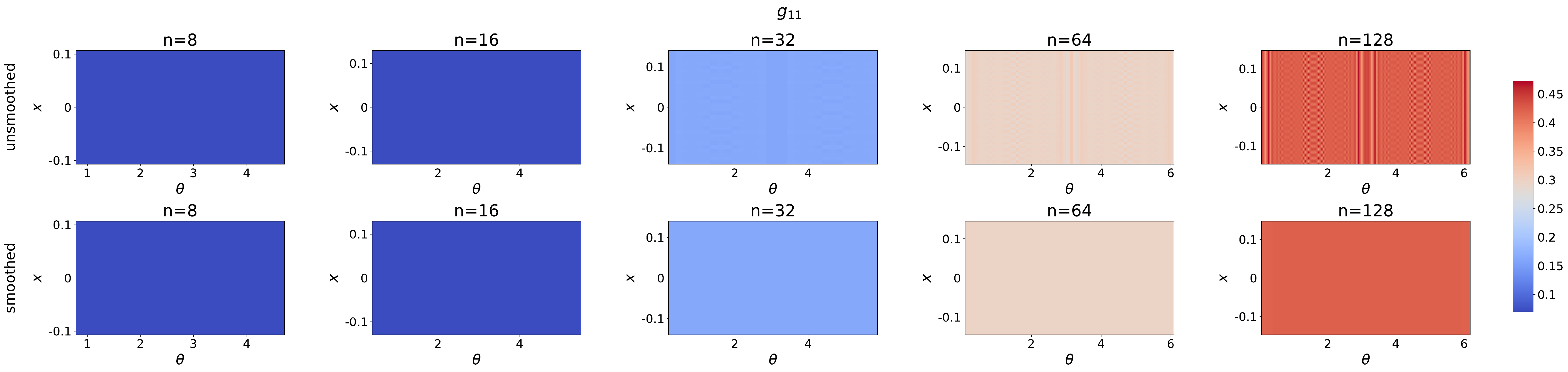}
    \caption{Estimated metric component $g_{11}$ for the ellipse grid, with the same layout as Fig.~\ref{fig:ellipse_x_theta_g00}.}
  \label{fig:ellipse_x_theta_g11}
\end{figure}

\begin{figure}[h!]
  \centering
    \includegraphics[width=\textwidth, trim=0cm 0cm 0cm 0cm, clip]{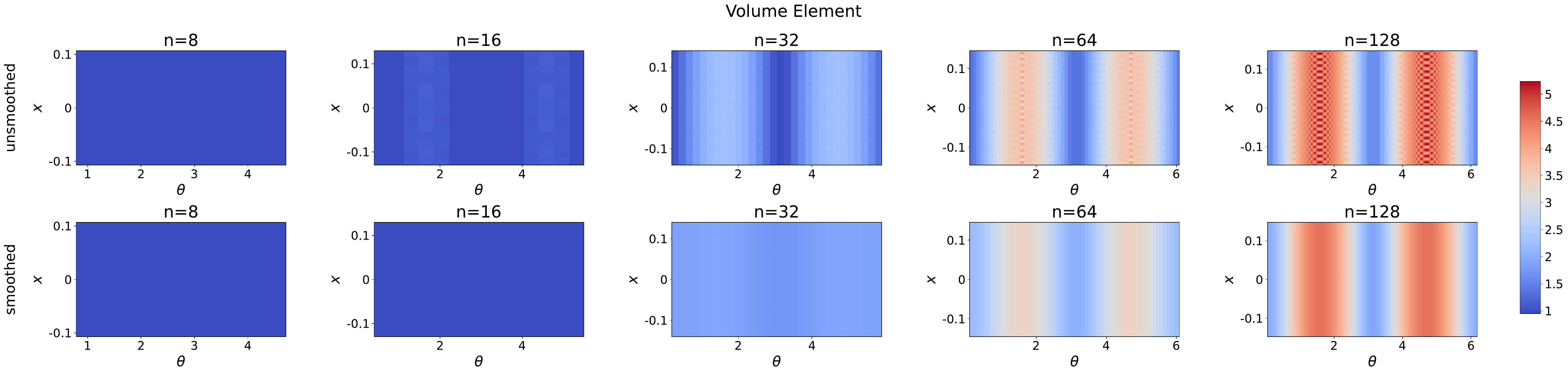}
    \caption{Estimated volume element on the ellipse grid. Rasterization effects are milder than for curvature but are still visible at high grid densities.}
  \label{fig:ellipse_x_theta_volume_element}
\end{figure}

\begin{figure}[h!]
  \centering
    \includegraphics[width=\textwidth, trim=0cm 0cm 0cm 0cm, clip]{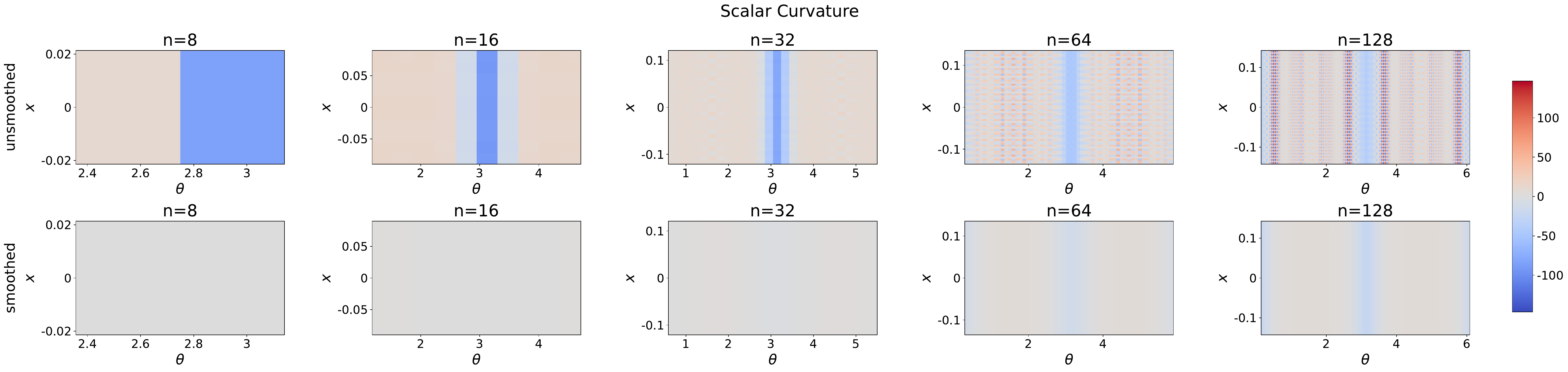}
    \caption{Estimated scalar curvature on the ellipse grid. Curvature is the most sensitive of the displayed quantities because it depends on further finite differences of the metric.}
  \label{fig:ellipse_x_theta_scalar_curvature}
\end{figure}

\paragraph{Metric-tensor smoothing.}
A simple mitigation is to smooth the estimated metric tensor before computing the derived quantities. In our implementation, the finite-difference metric is first computed componentwise and then filtered over the parameter grid,
\[
    \widetilde g_{ij} = G_\sigma * g_{ij},
\]
where $G_\sigma$ is a separable Gaussian kernel. We use reflection padding along the non-cyclic $x$ axis and wrapped padding along the cyclic $\theta$ axis. The implementation is done in PyTorch and is available in the released codebase. The volume element and scalar curvature are then computed from $\widetilde g$ instead of $g$.

This smoothing is meaningful when the parameter grid is already fine enough that neighboring metric values should not change abruptly under the underlying continuous transformation. Under this assumption, high-frequency oscillations are more likely to come from rasterization and interpolation than from the manifold geometry itself. The smoothed rows in Figs.~\ref{fig:ellipse_x_theta_g00}--\ref{fig:ellipse_x_theta_scalar_curvature} show that this procedure reduces much of the high-frequency variation, especially for scalar curvature. At the same time, smoothing can also remove genuine sharp variation if used too aggressively, so we treat it as a numerical stabilization tool rather than as part of the definition of the measures.

The main experiments in the paper use the unsmoothed estimates. This is acceptable for the reported datasets because the grid density used there is moderate, and the convergence ablation in Appendix~\ref{appendix:measure_convergence_on_image_datasets} suggests that the estimates are already in a useful range at the chosen density. For users generating denser versions of the datasets, however, we recommend checking the sensitivity to rasterization and considering metric smoothing, especially for curvature.

\paragraph{Projection view.}
Figure~\ref{fig:ellipse_x_theta_umap_measures} shows the same example through a two-dimensional UMAP \cite{mcinnes2018umap} projection of the image samples. The projection is used only for visualization; the measures themselves are computed on the original transformation grid. The same UMAP coordinates are used for the unsmoothed and smoothed rows, so differences between rows reflect only the change in the estimated measures. The projection confirms the same qualitative picture: smoothing reduces local high-frequency fluctuations while preserving the broader variation of the metric and volume-related quantities.

\begin{figure}[h!]
  \centering
    \includegraphics[width=\textwidth, trim=0cm 0cm 0cm 0cm, clip]{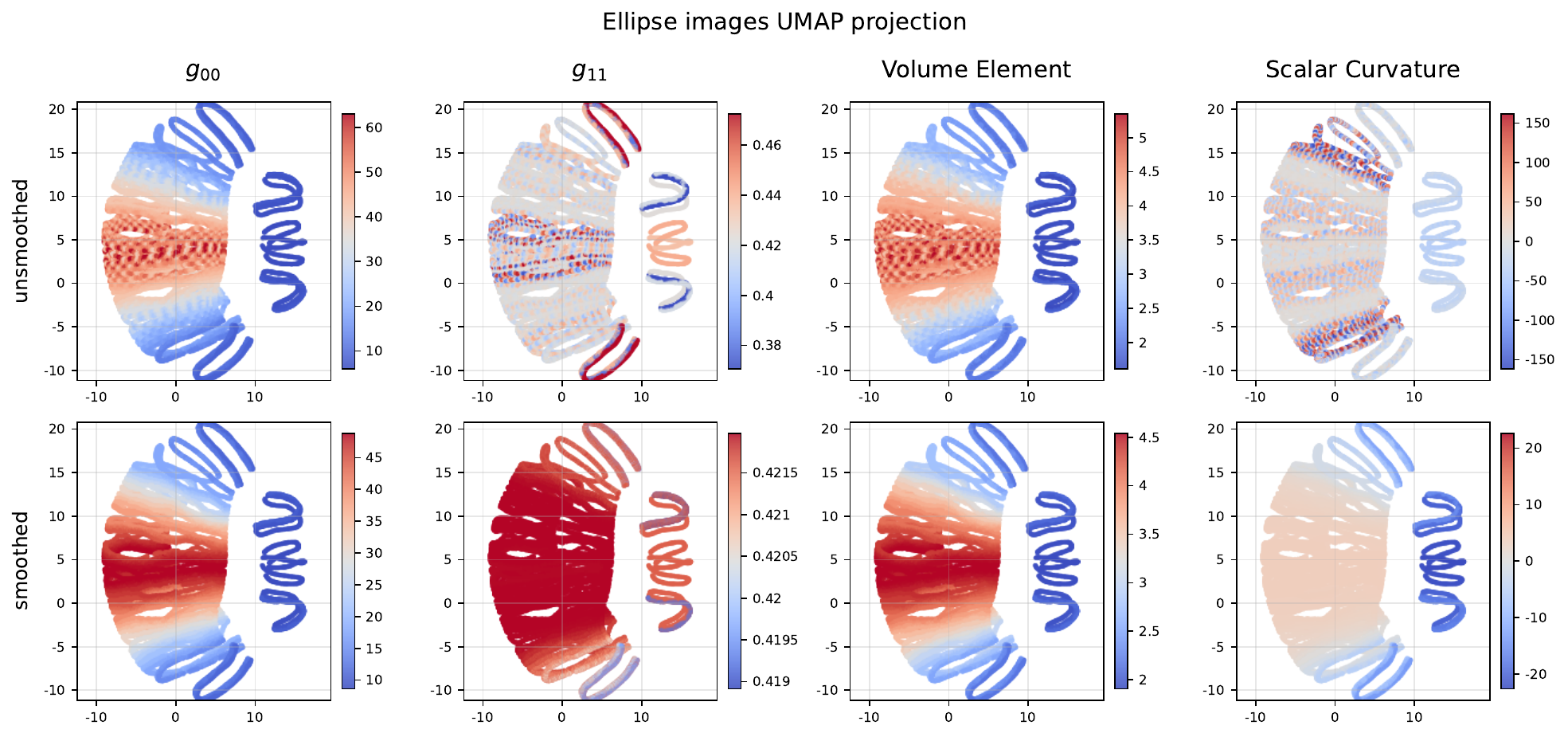}
    \caption{UMAP projection of the ellipse images colored by the estimated measures. The top row uses the direct finite-difference estimates and the bottom row uses metric smoothing. The UMAP coordinates are identical across rows.}
  \label{fig:ellipse_x_theta_umap_measures}
\end{figure}

\paragraph{Practical warning.}
The data manifolds considered here are manifolds of rasterized images, not continuous shapes. At fixed image resolution, increasing the parameter-grid density eventually probes the discretization of the image formation process rather than only the underlying continuous transformation. This is a substantial issue in its own right and we leave a systematic study of rasterization effects to future work. In practice, one should be careful when generating new versions of these datasets with very high parameter density but unchanged image resolution. A safer procedure is to increase the image resolution together with the parameter density, compare estimates across densities using a weak metric such as the negative Sobolev norm, and optionally smooth the metric tensor to suppress high-frequency rasterization noise.

\section{Manifold Fitting}

Here we provide details on the way we setup and run the experiments for the manifold fitting use case of our framework. This included details on the datasets used, the fitting models and the way we estimate the bound curves.

\subsection{Fitting methods}

\subsubsection*{The fitting process and data selection}

Most manifold-fitting methods estimate local structures on \(M\), such as tangent
\(d\)-planes, tubular neighborhoods, or charts, from a finite fitting set. These
local structures define a reconstructed manifold \(\widehat{M}\) together with a
projection or reconstruction map onto \(\widehat{M}\). Evaluating the fit ideally
requires computing the Hausdorff distance \(H(M,\widehat{M})\). A direct
numerical approximation would require dense, uniformly distributed point sets on
both \(M\) and \(\widehat{M}\), together with exhaustive nearest-neighbor
searches, which is computationally expensive.

We therefore separate the data used for fitting from the data used for
evaluation. Throughout this section, \(X\) denotes the fitting or training set,
while \(Y\) denotes a dense evaluation/reference set sampled from the ground-truth
manifold. After fitting on \(X\), we evaluate the approximation on \(Y\). For
each \(y\in Y\), let \(\widehat{y}\) be its projection or reconstruction on
\(\widehat{M}\). The projection error \(\|y-\widehat{y}\|_2\) serves as a
surrogate for \(d_{\ell^2}(y,\widehat{M})\), and the directed Hausdorff term
\(\sup_{y\in M} d_{\ell^2}(y,\widehat{M})\) is approximated by the maximum error
over the dense reference set \(Y\). The reverse direction and the relation to the
full symmetric Hausdorff distance are discussed below.

\paragraph{Toy manifolds.}
Our procedure is:

\begin{itemize}
    \item Construct a dense, approximately uniform evaluation set
    \(Y=\{y_i\}_{i\le n_{\mathrm{test}}}\subset M\).
    \item Uniformly sample \(n\) fitting points
    \(X=\{x_i\}_{i\le n}\subset M\).
    \item Fit the chosen manifold-fitting method on \(X\).
    \item Project the evaluation points \(Y\) to
    \(\widehat{Y}=\{\widehat{y}_i\}_{i\le n_{\mathrm{test}}}\) and compute the
    pointwise errors \(\|y_i-\widehat{y}_i\|_2\) and their maximum.
\end{itemize}

The dense evaluation set \(Y\) is generated either directly via a grid in a
closed-form parametrization or by drawing a large uniform sample
\(S=\{s_i\}_{i\le N}\subset M\) and selecting \(n_{\mathrm{test}}\) centroids
\(C=\{c_i\}\subset M\) that minimize the Sinkhorn loss between \(C\) and \(S\).
All uniform sampling steps rely on closed-form sampling formulas tailored to
each manifold.

\paragraph{Image manifolds.}
The procedure mirrors the analytic case:

\begin{itemize}
    \item Select a dense evaluation subset
    \(Y=\{y_i\}_{i\le n_{\mathrm{test}}}\subset X_G\) from the grid \(X_G\).
    \item Sample a fitting set
    \(X=\{x_i\}_{i\le n}\subset X_G\setminus Y\).
    \item Fit the manifold-fitting method on \(X\).
    \item Project or reconstruct the evaluation points \(Y\) to
    \(\widehat{Y}=\{\widehat{y}_i\}_{i\le n_{\mathrm{test}}}\) and compute
    \(\|y_i-\widehat{y}_i\|_2\) and their maximum.
\end{itemize}

The dense evaluation subset \(Y\) is selected by a maximal-distance
subsampling algorithm. Sampling of the fitting set \(X\) uses a discrete
distribution obtained by normalizing the per-point volume element on the grid.
Projection is geometric for MMLS and is given by reconstruction for the
\(\beta\)-VAE.

\subsubsection*{Manifold Moving Least Squares (MMLS)}\label{appendix:mmls}

MMLS \cite{sober2020manifold} is a projection-based manifold fitting method. Given sample points \(Y=\{y_i\}_{i\le N}\subset\mathbb{R}^D\) drawn from an unknown manifold \(M\), the algorithm estimates a fitted manifold \(\widehat{M}\) and provides a projection operator that maps any nearby point \(p\) to \(\widehat{M}\). A key ingredient is a kernel \(\theta\) that assigns similarity weights based on point–point distances, enabling smooth local approximations.

The procedure consists of two stages:

\begin{itemize}
    \item \textbf{Local affine fitting.}  
    For a query point \(p\in\mathbb{R}^D\) close to \(M\), find an affine subspace \(H\) and a point \(q\in B_D(p,\tfrac{\tau}{2})\) that minimize
    \[
        \sum_{i\le N} d_{\ell^2}(y_i, H)^2\,\theta(\|y_i - q\|_2),
    \]
    subject to \(p-q \perp H\), where \(\tau\) denotes the reach of \(M\). This step provides a locally estimated tangent space.
    
    \item \textbf{Local polynomial reconstruction.}  
    With \(q\) and \(H\) fixed, fit a multivariate polynomial \(g:\mathbb{R}^{d}\to\mathbb{R}^D\) by solving
    \[
        \sum_{i\le N} \| g(\mathrm{proj}_H(y_i)) - y_i \|_2^{2}\,\theta(\|y_i - q\|_2),
    \]
    where \(\mathrm{proj}_H(y_i)\) is the orthogonal projection onto \(H\). This serves as a local analogue of the exponential map, taking coordinates in the tangent space \(T_q\widehat{M}\) and mapping them to \(\widehat{M}\). The final projection of \(p\) is then \(g(p)\).
\end{itemize}

The method originates from the Moving Least Squares framework for hypersurfaces \cite{levin2003mesh} and was later extended to arbitrary embedded manifolds \cite{sober2020manifold}. Other techniques exist (e.g., \cite{zhang2004principal} and the constructions used in \cite{fefferman2016testing,fefferman2018fitting,genovese2012minimax}), but these are typically more intricate or tailored to dimensionality reduction rather than direct manifold fitting.

\paragraph{Implementation for our experiments.}

We apply MMLS using the uniformly sampled fitting set \(X\) as the reference set and the dense evaluation set \(Y\) as queries. Each \(y\in Y\) is projected onto \(\widehat{M}\) as follows:
\begin{itemize}
    \item Identify the \(k\) nearest neighbors
    \(N_y=\{x_{i_1},\ldots,x_{i_k}\}\subseteq X\).
    \item Compute distances \(\|y-x_{i_m}\|_2\) for all neighbors and assign weights via an isotropic Gaussian kernel with bandwidth \(\sigma\):
    \[
    w_m=\exp\left(-\left(\frac{\|y-x_{i_m}\|_2}{2\sigma}\right)^2\right).
    \]
    \item Estimate \(q\) as the weighted average of \(N_y\). Estimate the affine space \(H\) by performing weighted PCA on \(N_y\) with weights \(w_m\).
    \item Project \(y\) orthogonally onto \(H\). In place of the full polynomial stage in \cite{sober2020manifold}, we use a degree-1 local linear approximation.
\end{itemize}

\paragraph{Hyperparameters.}
\begin{itemize}
    \item \(k = 5\) for toy manifolds and \(k = 2^{d+1}\) for image manifolds.
    \item Gaussian kernel bandwidth: \(\sigma = 1.0\).
\end{itemize}

\subsubsection*{$\beta$-VAE}\label{appendix:beta-vae}

We use the reference implementation from \cite{betavaerepo}, which provides two standard architectures: model \textbf{B} from \cite{higgins2017beta} and model \textbf{H} from \cite{burgess2018understanding}. Both operate on \(64\times 64\) images and differ mainly in channel width and bottleneck design.

\textbf{Model B.}  

Encoder: $\text{Conv } 32\times 4\times 4\ (\text{stride }2),\;
32\times 4\times 4\ (\text{stride }2),\;
32\times 4\times 4\ (\text{stride }2),\;
32\times 4\times 4\ (\text{stride }2),\;\text{FC } 256 \rightarrow 256 \rightarrow 2z_d.$

Decoder: $\text{FC } z_d \rightarrow 256 \rightarrow 256 \rightarrow 512,
\text{ followed by Deconv layers mirroring the encoder (stride 2)}.$

All layers use ReLU activations.

\paragraph{Model H.}  
Encoder: $\text{Conv } 32\times 4\times 4\ (\text{stride }2),\;
32\times 4\times 4\ (\text{stride }2),\;
64\times 4\times 4\ (\text{stride }2),\;
64\times 4\times 4\ (\text{stride }2),\;
256\times 4\times 4\ (\text{stride }1),\;
\text{FC } 2z_d.$

Decoder: $\text{FC } z_d \rightarrow 256,
\text{ followed by Deconv layers reversing the encoder (strides 1 and 2)}.$

All layers use ReLU activations.

\paragraph{Objective.}  
Model \textbf{H} uses the standard ELBO with a strengthened KL term:
\[
\mathcal{L} = \text{recon} + \beta\,D_{\mathrm{KL}}, \qquad \beta>1.
\]
Model \textbf{B} follows the capacity-controlled formulation:
\[
\mathcal{L} = \text{recon} + \gamma\,|D_{\mathrm{KL}} - C|,
\]
where \(C\) is annealed linearly from \(0\) to \(C_{\max}\).

\paragraph{Hyperparameters.}  
\begin{itemize}
    \item \(\beta=4\), \(\gamma=100\), \(C_{\max}=20\).
    \item Learning rate: \(5\times 10^{-4}\) (dSprites), \(10^{-4}\) (COIL-20).
    \item Batch size: 64.
    \item Latent dimension: \(z_d=10\).
    \item Optimizer: Adam with \(\beta_1=0.9\), \(\beta_2=0.999\).
    \item Architecture choice: model B for dSprites, model H for COIL-20.
    \item Training budget: up to \(10^6\) iterations or \(10^5\) epochs.
    \item Early stopping when training loss does not improve for \(0.5\%\) of max epochs.
\end{itemize}

These settings closely follow \cite{betavaerepo}, with minimal tuning for stable convergence. After training on the fitting set \(X\), the decoder is used to reconstruct the evaluation points \(Y\).

\subsection{Bound Curves}\label{appendix:bound-curves}

\subsubsection*{Takeaways on constants in the Genovese et al.\ bounds}\label{appendix:genovese-constants-takeaways}

The lower and upper bounds of \cite{genovese2012minimax} are primarily useful for their
\emph{rates} in $n$ and their dependence on intrinsic dimension $d$. While their proofs in principle
yield explicit constants, those constants become unwieldy and are not practically interpretable for
high-dimensional ambient data.

\paragraph{Lower bound.}
The lower bound is obtained via a two-point testing construction (Le Cam’s method): two manifolds
$M_0,M_1$ are constructed at Hausdorff separation $\gamma$ while keeping the induced distributions
close, yielding the rate $n^{-2/(d+2)}$. The associated constant depends on geometric parameters
(e.g.\ reach and noise radius) and on the specifics of the construction (choices of bump geometry and
localization). As a result, explicitly tracking the constant provides little insight beyond confirming the
scaling.

\paragraph{Upper bound and why constants become intractable.}
The upper bound is proved by constructing a (theoretical) estimator based on maximum likelihood and
controlling its error via entropy bounds. A key step is bounding the Hausdorff covering number of the
manifold class: Theorem~9 bounds
\[
N(\gamma,\mathcal{M},H)\;\le\;\kappa_2(\kappa,d,D)\,\exp\!\big(\kappa_3(\kappa,d,D)\,\gamma^{-d/2}\big),
\]
where already the prefactor satisfies
\[
\kappa_2(\kappa,d,D)=\binom{D}{d}\Big(\frac{c_2}{\kappa}\Big)^{D}.
\]
Thus, combinatorial terms (the factor $\binom{D}{d}$) and exponential dependence on ambient dimension
enter \emph{before} subsequent steps (bracketing, MLE concentration, and conversion from Hellinger to
Hausdorff distance), making the resulting explicit constants extremely large for realistic $D$.

\paragraph{Practical use in this paper.}
Accordingly, we treat these results as guidance on \emph{scaling} and on which geometric parameters
matter, and we estimate envelope constants empirically by aligning the theoretical rates to observed
error curves.

\subsubsection*{Estimating the bounds' constants}\label{appendix:estimating-bound-constants}

Because directly computing the constants in the bounds is impractical, we select them based on the empirical error curves. The idea is to simply place the upper bounds so that they just touch the empirical curve from above and the lower bounds so that they just touch the empirical curve from below. To get more reliable results, we first fit a curve on the empirical curve and use this to place the bounding curve. The selection of the curves family for the empirical curve is based on the observation that the formulas related to bounds depend exponentially on the dimension and multiplicatively on the constant for example looking at the Genovese et al. upper and lower bounds:

\begin{equation}
C_1 \left(\frac{1}{n}\right)^{\frac{2}{2+d}} \;\;\leq\;\; R_n(\mathcal{Q}) \;\;\leq\;\; C_2\left(\frac{\log n}{n}\right)^{\frac{2}{2+d}}.
\end{equation}

It is reasonable thus to assume that the real error curve can be well approximated by a formula of the form:

$R_n \sim C (\frac{1}{n})^{g(d)},$

which can be written in terms of logarithms as:

$\log R_n \sim -g(d)\ \log n + \log C = -A\ \log n + \log C.$

With this assumption, we use the logarithms of the empirical curve values $\{(\log n_j, \log \hat{R}_{n_j})\}_{j\in J}$, where $\{n_j\ |\ j\in J\}\subseteq [0, N]$ is an increasing sequence of number of samples of the total $N$ dataset points, to fit the above regression. This way we get estimates $\hat{A}$, $\hat{C}$ of $A$ and $C$. Then we get the resulting fitted empirical curve:

$$\hat{R}^{fit}_n = \hat{C} \left(\frac{1}{n}\right)^{\hat{A}}.$$

Finally, for each sample size \(n_j\), we compute the 0.99 quantile of the empirical errors across repetitions. The upper-bound constant is chosen as the smallest value for which the theoretical curve lies above these quantiles, and the lower-bound constant is chosen as the largest value for which the theoretical curve lies below them. In our experiments, the sample sizes correspond to the fractions
\[
[0.01, 0.02, 0.05, 0.1, \ldots, 1.0]
\]
of the dataset of \(N\) points. Each percentage is repeated three times, and the empirical curve used here is their pointwise average.

\begin{figure}[h!]
  \centering
    \includegraphics[width=\textwidth, trim=0cm 0cm 0cm 0.25cm, clip]{figures__05_applications__28012026_dsprites_dim_bounds_shaded_absolute.pdf}
    \caption{Fitting bounds on dSprites for different dimensions utilizing MMLS.}
  \label{fig:dsprites_dim_bounds_with_fitted}
\end{figure}

\begin{figure}[h!]
  \centering
    \includegraphics[width=0.585\linewidth
    ]{figures__05_applications__28012026_all_models_bounds_shaded_absolute.pdf}
    \caption{Fitting bounds for MMLS \& $\beta$-VAE on dSprites (4D) }
    \label{fig:models_with_fitted}
\end{figure}

\begin{figure}[h!]
    \centering
    \includegraphics[width=\linewidth, trim=0cm 4cm 0cm 0.15cm, clip]{figures__05_applications__28012026_all_datasets_bounds_shaded_absolute.pdf}
    \caption{Fitting bounds for MMLS on from left to right Sphere, Torus, COIL-20 and dSprites. }
    \label{fig:dataset_with_fitted}
\end{figure}
\subsubsection*{Analysis of the fitted empirical curves}

To provide a better understanding on the way we select the upper and lower bounds, we display a version of figures \ref{fig:dataset}, \ref{fig:models} and \ref{fig:dsprites_dim_bounds} including the fitted empirical curve as a blue dotted line in figures \ref{fig:dataset_with_fitted}, \ref{fig:models_with_fitted} and \ref{fig:dsprites_dim_bounds_with_fitted}.

Furthermore, we include figures \ref{fig:dataset_regression_fitting}, \ref{fig:models_regression_fitting} and \ref{fig:dsprites_dim_bounds_regression_fitting} with the regression line on the logarithmic values of the empirical curves, residual plots on the fitted values, QQ plots and also the values of $R^2$. Those are again available for all three ablations present in the main paper. Most fits look reasonably good, with the exception of the \(\beta\)-VAE plot, whose lower \(R^2\) indicates that a single power law describes the empirical curve less well.

To assess whether the empirical curves are well described by the fitted power laws, Table~\ref{tab:fitting-exponents} reports the $R^2$ values of the logarithmic regressions, the theoretical decay exponents used for the curves from \cite{genovese2012minimax} and \cite{fefferman2018fitting}, and the fitted empirical exponent $\hat A$ in $\hat R_n^{\mathrm{fit}}=\hat C n^{-\hat A}$. Most settings are well approximated by a single power law. The main exception is the $\beta$-VAE cross-model experiment, where the substantially lower value $R^2=0.62$ matches the fitting issues observed in Appendix~\ref{appendix:estimating-bound-constants}.

\begin{table}[h!]
\centering
\caption{
Summary of the logarithmic power-law fits used for the bound comparisons.
For each representative sweep, we report the regression quality $R^2$, the theoretical decay exponents used for the Genovese and Fefferman curves, and the empirically fitted exponent $\hat A$.
The low $R^2$ value for the $\beta$-VAE setting is highlighted in bold.
}
\label{tab:fitting-exponents}

\setlength{\tabcolsep}{4pt}
\renewcommand{\arraystretch}{1.08}
\begin{tabular}{lcccc}
\hline
Setting & $R^2$ & Gen. & Feff. & Fit $\hat A$ \\
\hline
\multicolumn{5}{l}{\textit{Cross-dimension: dSprites, MMLS}} \\
$d=1$ & 0.99 & 0.67 & 1.00 & 0.48 \\
$d=2$ & 0.85 & 0.50 & 0.50 & 0.27 \\
$d=3$ & 0.97 & 0.40 & 0.33 & 0.25 \\
$d=4$ & 0.96 & 0.33 & 0.25 & 0.22 \\
\hline
\multicolumn{5}{l}{\textit{Cross-model: 4D dSprites}} \\
MMLS & 0.96 & 0.33 & 0.25 & 0.22 \\
$\beta$-VAE & \textbf{0.62} & 0.33 & 0.25 & 0.11 \\
\hline
\multicolumn{5}{l}{\textit{Cross-dataset: MMLS}} \\
Sphere & 0.99 & 0.50 & 0.50 & 0.91 \\
Torus & 0.98 & 0.50 & 0.50 & 0.82 \\
COIL-20 & 0.79 & 0.40 & 0.33 & 0.17 \\
dSprites & 0.96 & 0.33 & 0.25 & 0.22 \\
\hline
\end{tabular}
\end{table}

\begin{figure}[h!]
  \centering
    \includegraphics[width=\textwidth, trim=0cm 0cm 0cm 0.25cm, clip]{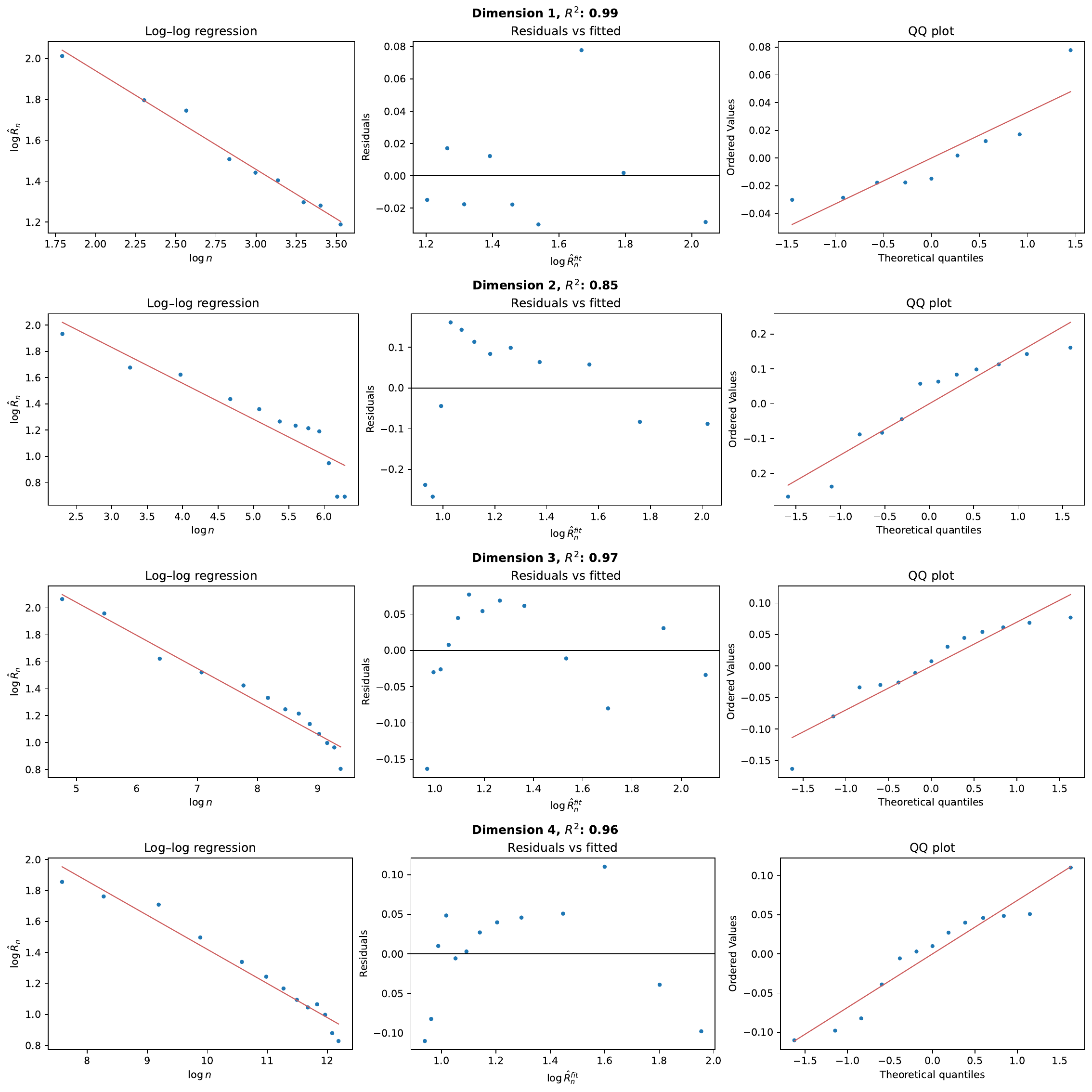}
    \caption{Regression evaluation on dSprites for different dimensions utilizing MMLS.}
  \label{fig:dsprites_dim_bounds_regression_fitting}
\end{figure}

\begin{figure}[h!]
  \centering
    \includegraphics[width=\linewidth, trim=0cm 0cm 0cm 0cm, clip]{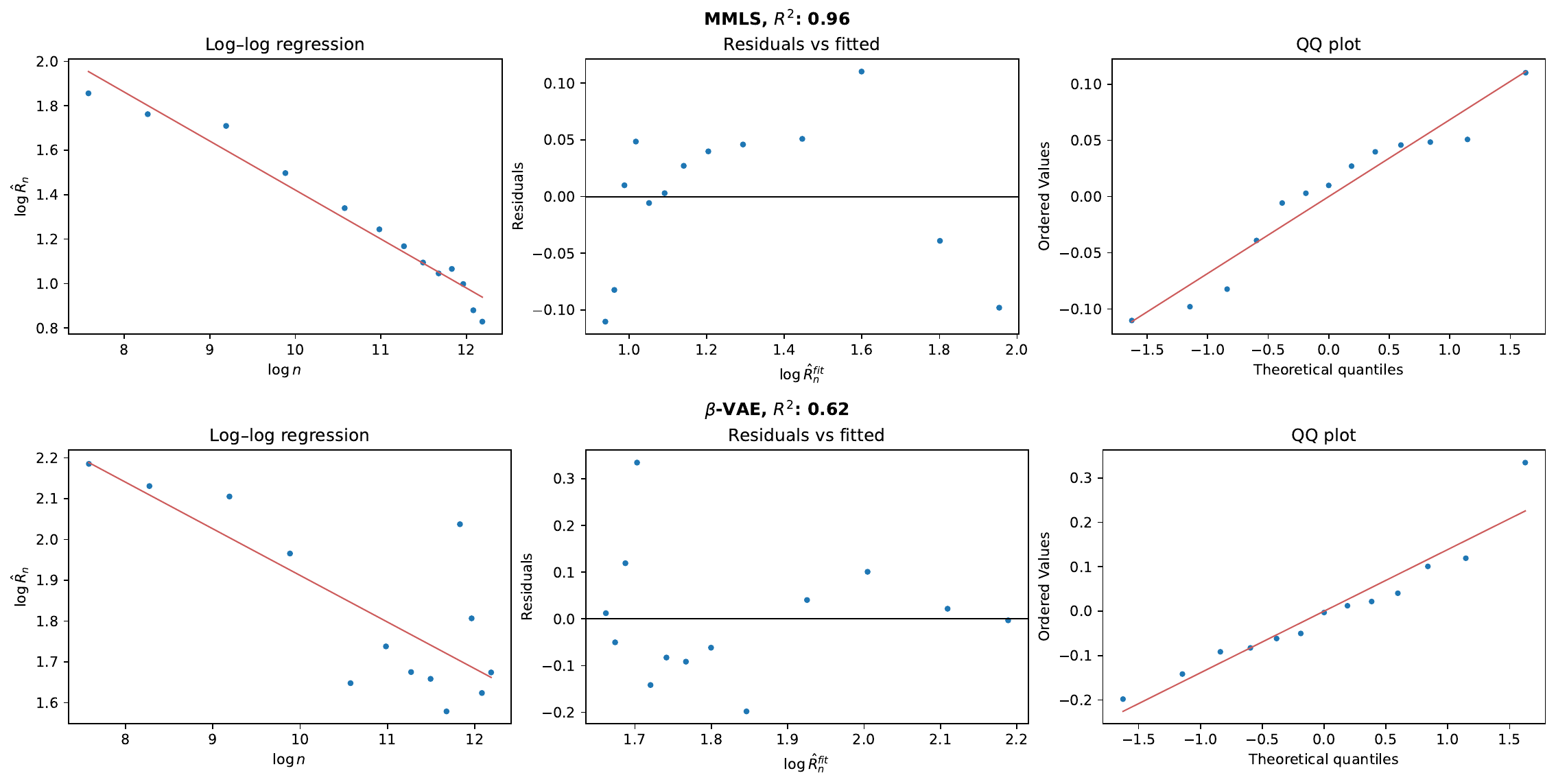}
    \caption{Regression evaluation for MMLS \& $\beta$-VAE on dSprites (4D) }
    \label{fig:models_regression_fitting}
\end{figure}

\begin{figure}[h!]
    \centering
    \includegraphics[width=\linewidth, trim=0cm 0cm 0cm 0.15cm, clip]{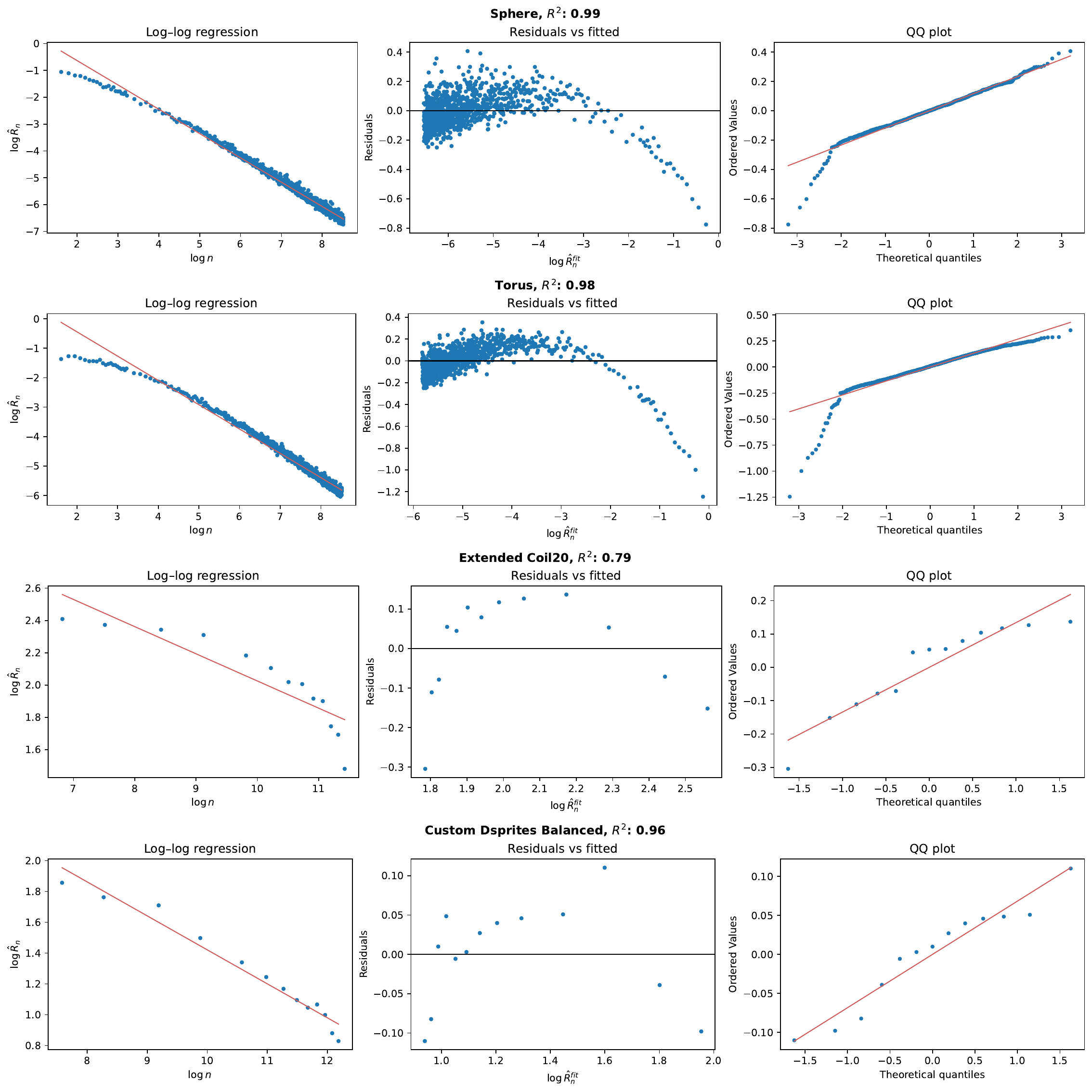}
    \caption{Regression evaluation for MMLS on from left to right Sphere, Torus, COIL-20 and dSprites.}
    \label{fig:dataset_regression_fitting}
\end{figure}

\newpage
\subsubsection*{Pointwise proxy for the Hausdorff distance}
\label{appendix:pointwise_hausdorff_proxy}

\begin{figure}[h!]
  \centering
    \includegraphics[width=\textwidth]{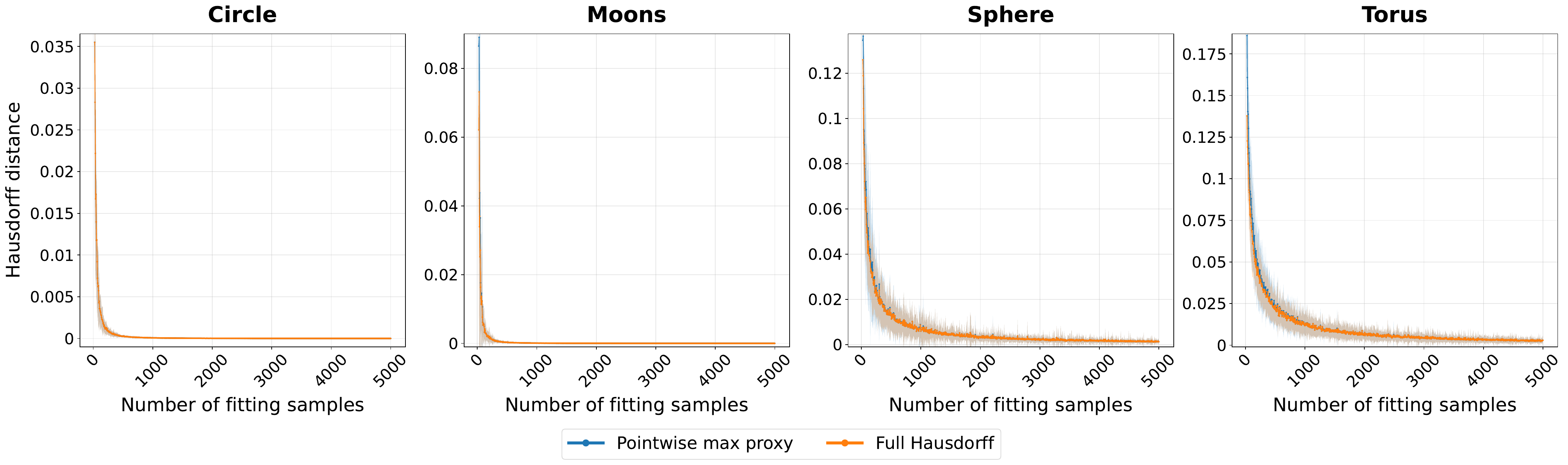}
    \caption{Comparison between the pointwise maximum proxy and the full symmetric Hausdorff distance on the toy manifold-fitting experiments. Curves show means over repetitions, with shaded regions corresponding to $\pm 2\sigma$.}
  \label{fig:pointwise_vs_full_hausdorff_curves}
\end{figure}

\begin{figure}[h!]
  \centering
    \includegraphics[width=\textwidth]{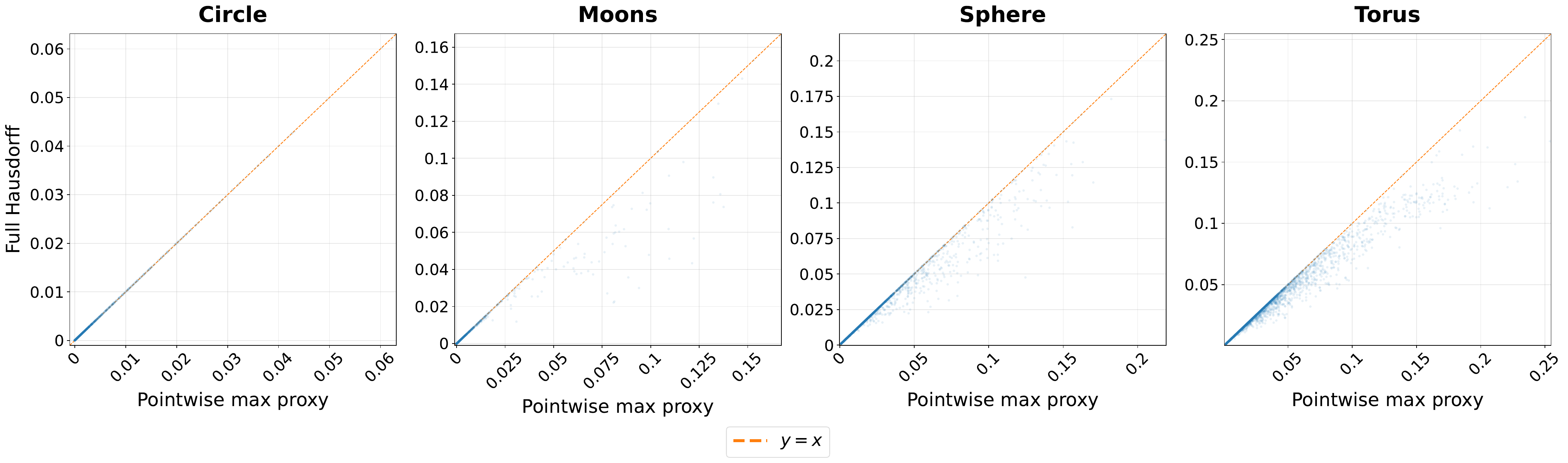}
    \caption{Repetition-level comparison between the pointwise maximum proxy and the full symmetric Hausdorff distance. The diagonal line indicates equality between the two quantities.}
  \label{fig:pointwise_vs_full_hausdorff_scatter}
\end{figure}

In the manifold-fitting experiments, computing the full symmetric Hausdorff distance at every sample size and repetition would require repeated nearest-neighbor searches between the original samples and their projections onto the fitted manifold. To reduce this cost, we use the pointwise proxy
\[
    d_{\mathrm{pw}}(X,\widehat X)
    = \max_i \|x_i - \widehat x_i\|,
\]
where $x_i$ is an original sample and $\widehat x_i$ is its projected point. For the corresponding finite point clouds $X=\{x_i\}$ and $\widehat X=\{\widehat x_i\}$, this quantity upper-bounds the symmetric Hausdorff distance, since each point can always be matched to its paired counterpart. It can therefore overestimate the true Hausdorff distance, but should be a reasonable proxy when the fitted manifold is not strongly distorted and the nearest projected point to $x_i$ is close to $\widehat x_i$.

We checked this approximation on the toy manifolds used in the fitting experiments by comparing $d_{\mathrm{pw}}$ with the full symmetric Hausdorff distance across sample sizes and repetitions. Figure~\ref{fig:pointwise_vs_full_hausdorff_curves} shows that the pointwise proxy follows the same trend as the full Hausdorff distance, with a mild overestimation in most regimes. Figure~\ref{fig:pointwise_vs_full_hausdorff_scatter} shows the corresponding repetition-level comparison. The largest discrepancies occur in lower-sample regimes, where the fitted manifold is less stable.

\newpage

\section{Per-Class Manifold Geometry on COIL-20}
  \label{app:coil-per-class}

  \begin{figure}[h]
      \centering
      \includegraphics[width=\linewidth]{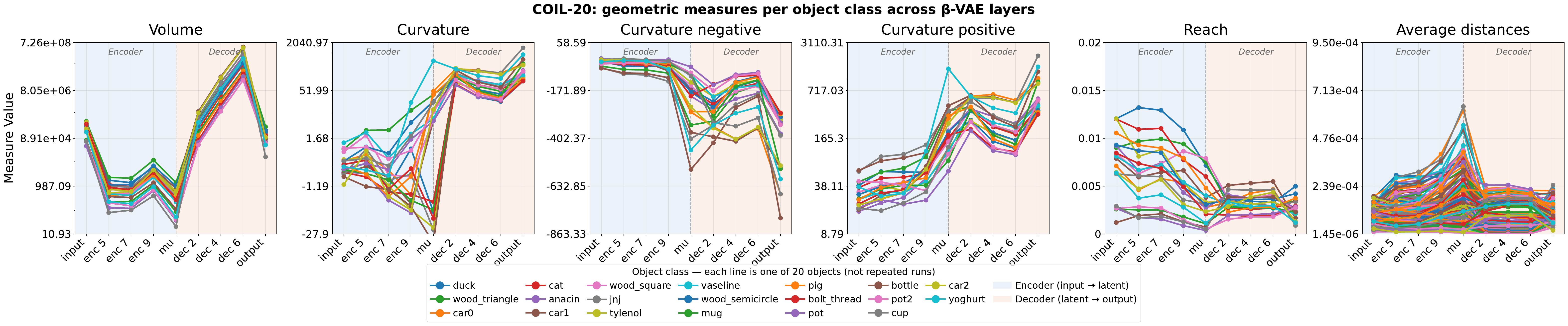}
      \caption{\textbf{Per-class COIL-20 manifold geometry across $\beta$-VAE layers.}
        Same six measures and layer layout as the bottom row of
        Figure~\ref{fig:layer-geometry}, but each of the 20 curves is one COIL-20 object
        class (not repeated training runs). The $x$-axis runs input $\rightarrow$ encoder
        $\rightarrow$ latent $\mu \rightarrow$ decoder $\rightarrow$ output, with encoder
        and decoder halves shaded blue and orange. All measures are normalized except
        volume.}
      \label{fig:coil-full}
  \end{figure}
  
  The bottom row of Figure~\ref{fig:layer-geometry} summarizes the COIL-20 results as the
  mean over the 20 object classes together with a $\pm 1\sigma$ band, in order to keep the
  main-paper figure legible. For completeness, Figure~\ref{fig:coil-full} reports the same
  six geometric measures for each object class \emph{individually}. 

  The layout matches the main-paper figure: the $x$-axis traces the layers of the
  $\beta$-VAE from the input through the encoder to the latent $\mu$ and back through the
  decoder to the output, with the encoder and decoder halves shaded blue and orange
  respectively, and all measures normalized except volume. The per-class curves follow the
  same qualitative trends discussed in the main text, while exposing the inter-class spread
  that the $\pm 1\sigma$ band of Figure~\ref{fig:layer-geometry} aggregates into a single
  summary.

\end{document}